\documentclass{article}

\usepackage[preprint]{neurips_2026}

\usepackage[utf8]{inputenc}
\usepackage[T1]{fontenc}
\usepackage{microtype}
\usepackage{graphicx}
\usepackage{booktabs} 
\usepackage{multirow}
\usepackage{float}
\graphicspath{{./figures/}}

\usepackage{hyperref}
\usepackage{url}
\usepackage{xcolor}

\usepackage{amsmath}
\usepackage{amssymb}
\usepackage{mathtools}
\usepackage{amsthm}

\usepackage[capitalize,noabbrev]{cleveref}

\theoremstyle{plain}
\newtheorem{theorem}{Theorem}[section]
\newtheorem{proposition}[theorem]{Proposition}

\theoremstyle{definition}

\theoremstyle{remark}
\newtheorem{remark}[theorem]{Remark}

\usepackage[textsize=tiny]{todonotes}

\title{Martingale Neural Operators: Learning Stochastic Marginals via Doob-Meyer Factorization}

\author{%
	Kai Hidajat \\
	Department of Applied Mathematics \\
	University of Washington \\
	Seattle, WA, USA \\
	\texttt{hidajatk@uw.edu} \\
}

\begin{document}

\maketitle

\begin{abstract}
	Neural operators excel as deterministic surrogates, but inevitably collapse to the conditional mean when applied to stochastic PDEs, discarding the variance and tail structure upon which uncertainty quantification depends. Recovering this structure typically requires Monte Carlo rollouts or grafted generative models, both of which surrender the one-shot efficiency and resolution invariance that define the operator paradigm. To resolve this, we draw on the Doob-Meyer theorem, which establishes that any semimartingale fundamentally decomposes into a predictable drift and an unpredictable, zero-mean martingale. Translating this theorem into an architectural prior, we introduce the Martingale Neural Operator (MNO). MNO maps an initial condition directly to the conditional mean and covariance of the terminal law, parameterized by a drift-like mean and a low-rank factor \(B_\phi\) with \(B_\phi^\top B_\phi\) positive semi-definite by construction. For our experiments, we use a Gaussian residual instantiation. Across 1D SPDEs, rough volatility, and 2D operator tasks, MNO reduces Wasserstein distance by up to \(120\times\) on \(\phi^4\) field theory and \(68\times\) on stochastic Burgers, evaluating \(\sim 3\times\) faster than a conditional diffusion baseline at matched wall-clock training budgets. On 2D tasks, MNO is comparable to FNO on zero-shot resolution transfer and turbulent flow, while quasi-deterministic systems such as Gray-Scott remain a failure mode.
\end{abstract}

\section{Introduction}
\label{sec:intro}

Operator learning methods such as Fourier Neural Operator (FNO) \citep{Li_Kovachki_etal_2021} and DeepONet \citep{Lu_Jin_etal_2021} learn maps between infinite-dimensional function spaces, allowing a single trained model to be resolution invariant. Across discretizations, neural operators evaluate zero-shot, granting and orders of magnitude speedups over iterative classical solvers \citep{Kovachki_Li_etal_2024,Kossaifi_Kovachki_etal_2023}. However, most successes remain deterministic \citep{Salvi_Lemercier_etal_2022}. For stochastic systems, SPDE solutions are laws over fields or paths, and standard \(L^2\)-trained operators converge to the conditional mean \(\mathbb{E}[u_T\mid u_0]\). The conditional mean is the right regression estimator, but it erases the aleatoric residual. Since spatial variance \citep{Shi_Gao_etal_2017, Zhang_Fiore_etal_2021}, correlation \citep{Nordman_Lahiri_2004, Popovic_VidalCalleja_etal_2017}, and sampling variability \citep{Gofer_Praisler_etal_2021} are important for many tasks \citep{Sabbar_Nisar_2025}, this tradeoff prohibits many tasks from observing the speedup MSE neural operators offer.

Classical stochastic solvers and uncertainty-quantification (UQ) methods successfully retain this distributional information, incur additional costs \citep{Psaros_Meng_etal_2023}. Monte Carlo rollouts are robust but slow, while polynomial and Wiener-chaos expansions \citep{Xiu_Karniadakis_2006, Schobi_Sudret_etal_2015} can be sharp when the stochastic dimension and basis stay controlled, yet delicate for high-dimensional or rough drivers \citep{Higham_2001,Gatheral_Jaisson_etal_2014,Shi_Lin_etal_2026}. Generative approaches, including diffusion or flow samplers, can represent richer laws, yet often require iterative sampling or task-specific guidance when physical consistency is imposed \citep{Song_SohlDickstein_etal_2021,Yu_Li_etal_2026}. Many applications need the more specific object of terminal marginal moments or risk envelopes \citep{Acciaio_Flores_etal_2026, Zhao_Jia_2020, Zhao_Yin_2026, Majumdar_Pavone_2020}, and should have the benefit of operator speed.

We introduce Martingale Neural Operator (MNO), which targets exactly that object. In semimartingale settings, the Doob-Meyer/canonical decomposition already gives the right grammar to integrate regression estimation with UQ, namely finite variation plus martingale residual noise \citep{Doob_1937,Meyer_1962,Protter_2005}. For MNO, we adapt this split into an architectural prior for a neural operator which predicts terminal marginals. In non-semimartingale settings such as rough volatility, the same split is a principled mean/covariance factorization. The implemented model has FNO heads for both components and a rank-\(r\), channel-aware residual factor whose squared norm gives per-channel variance and whose product defines a PSD covariance.

Our contributions are as follows:
\begin{enumerate}
	\item \textbf{Drift/Covariance Structural Bias.} We propose a neural operator architecture that explicitly separates a predictable mean operator from a zero-mean stochastic residual covariance operator, motivated by the Doob-Meyer decomposition.
	\item \textbf{One-Shot Uncertainty Quantification.} MNO provides closed-form mean and per-channel variance fields, together with a low-rank covariance factor for sampling, enabling moment-based risk metrics without iterative rollout or sample averaging at inference time.
	\item \textbf{Benchmark Suite.} We evaluate MNO on benchmarks spanning 1D stochastic surrogates, rough volatility dynamics, and 2D operator tasks. MNO achieves up to \(120\times\) reduction in \(W_2\) over stochastic-surrogate baselines on 1D SPDE tasks, \(2.6\times\) improvement over sequential baselines on terminal-marginal accuracy in the deeply rough volatility regime (\(H=0.1\)), and \({\sim}3\times\) faster inference than a conditional diffusion model under a matched wall-clock training budget. The 2D results with our unoptimized experimental setup show MNO is competitive with FNO on turbulent flow and resolution transfer, but CNN baselines can be stronger, and Gray-Scott is less favorable under the tested noise level.
\end{enumerate}

\paragraph{Positioning.} Existing stochastic operator methods largely pay for distributional information in one of four ways. {athwise stochastic models learn a temporal evolution law \citep{Wilson_Borovitskiy_etal_2021, Salvi_Lemercier_etal_2022}, Bayesian or probabilistic neural operators quantify uncertainty over operator predictions \citep{Magnani_Kramer_etal_2022, Bulte_Scholl_etal_2025, Bayraktar_Feng_etal_2025}, chaos expansions represent stochastic inputs through basis coefficients \citep{Shi_Lin_etal_2026, Shi_Thompson_etal_2026}, and diffusion or flow models generate samples iteratively \citep{Kohl_Chen_etal_2024, Bortoli_Thornton_etal_2023, Yu_Li_etal_2026}. MNO targets the one-shot terminal conditional marginal
\[
    u_0 \mapsto \mathcal{L}(u_T \mid u_0),
\]
represented by a conditional mean field and a positive-semidefinite residual covariance factor, which is a much smaller object. The MNO target is weaker than pathwise stochastic-process modeling and less expressive than a general generative model, but this narrower target is also the source of the computational advantage. By separating a deterministic drift-like operator from a centered residual factor, MNO restores the aleatoric variance lost by mean-squared training while preserving the one-shot, grid-transferable inference that makes neural operators attractive.

\section{Related Work}
\label{sec:related}

\paragraph{UQ and neural operators.} Deterministic neural operators such as FNO and DeepONet learn resolution-transferable solution maps for PDEs \citep{Li_Kovachki_etal_2021,Lu_Jin_etal_2021,Kovachki_Li_etal_2024}. A growing literature augments neural operators with predictive uncertainty. Approximate Bayesian Neural Operators place Bayesian structure over operator surrogates, giving epistemic uncertainty for parametric PDE prediction \citep{Magnani_Kramer_etal_2022}. Probabilistic Neural Operators treat operator learning as a distributional prediction problem over functions, giving a broad functional-UQ framework \citep{Bulte_Scholl_etal_2025}. Active operator-learning methods use predictive uncertainty to guide data acquisition \citep{Winovich_Daneker_etal_2026}, and stochastic-maximum-principle approaches provide another probabilistic route to operator learning \citep{Bausback_Tang_etal_2026}.

\paragraph{Neural operators for stochastic dynamics.} Another line models stochastic dynamics directly. Neural SDEs \citep{Shen_Cheng_2025}, Neural CDEs \citep{Kidger_Morrill_etal_2020}, Neural rough differential equations \citep{Morrill_Salvi_etal_2021}, and Neural SPDEs \citep{Salvi_Lemercier_etal_2022} learn temporal stochastic evolution laws, making them natural when the object of interest is a full path distribution or a history-dependent forecast. Such models preserve a sequential view of the stochastic process, but inference and training generally involve temporal rollout, pathwise state, or repeated sampling. 

\paragraph{Neural extensions of classical SPDE methods.} Classical uncertainty quantification represents stochastic solutions through Monte Carlo, stochastic Galerkin methods, polynomial chaos, and Wiener-chaos expansions \citep{Higham_2001, Xiu_Karniadakis_2006, Nouy_2009, Doostan_Owhadi_2011}. These methods are principled and often highly accurate when the stochastic dimension, basis, and regularity are favorable, but they can become delicate for high-dimensional noise, rough drivers, or singular SPDEs. Recent neural Wiener-chaos methods combine this classical structure with neural operators, learning chaos coefficients or Wick-Hermite expansions for stochastic differential equations and SPDEs \citep{Shi_Lin_etal_2026, Neufeld_Schmocker_2026, Shi_Thompson_etal_2026}.

\paragraph{Generative samplers.} Generative and physics-guided samplers can represent richer conditional laws through iterative denoising \citep{Song_SohlDickstein_etal_2021}, transport \citep{Bortoli_Thornton_etal_2023}, or guided sampling \citep{Yu_Li_etal_2026}. These methods are better suited when high-fidelity samples, multimodality, hard physical constraints, or inverse-problem posterior structure are the central requirement. Inference for generative methods usually requires many network evaluations or task-specific guidance, which contrasts with the operator learning paradigm.

\section{Stochastic Operator Setup}
\label{sec:setup}

Let \(D\) be a bounded spatial domain and \(H=L^2(D)\). The stochastic operator-learning target is a map
\[
	u_0\mapsto\mathcal{L}(u_T\mid u_0)\in\mathcal{P}(H),
\]
beyond a single field-valued prediction. Standard neural operators trained with squared error recover only the conditional mean by the orthogonality of \(L^2\) projection,
\[
	f^\star(u_0)=\arg\min_f \mathbb{E}\|u_T-f(u_0)\|_H^2
	=\mathbb{E}[u_T\mid u_0].
\]
Any marginal learner must retain this mean term while representing the residual uncertainty left by squared-error training.

For the terminal marginal, the next object after the mean is covariance. Covariance operators are positive semidefinite by definition, and practical uncertainty surrogates should preserve that constraint under discretization. A finite factor \(B\) gives the minimal algebraic mechanism, since \(\Gamma=B^\top B\) is PSD automatically, has rank at most the number of factor rows, and yields diagonal variance by summing squared factor fields. The statement concerns moment parameterization, independent of any particular residual law.

Mercer theory motivates spectral covariance expansions for positive trace-class kernels, and pathwise Gaussian-process conditioning gives another modern route to scalable sample representations \citep{Wilson_Borovitskiy_etal_2021}. The finite-factor viewpoint is weaker than an explicit Mercer model, because it preserves positivity and rank control while leaving eigenfunctions and orthonormal factors unidentified.

Distributional accuracy is reported with empirical Wasserstein-2 distances for stochastic tasks and mean RMSE where the benchmark is deterministic or mean-field. For Gaussian laws, \(W_2\) jointly penalizes mean and covariance mismatch. For sampled ensembles we estimate it from the predicted and ground-truth terminal samples.

\section{Martingale Neural Operator (MNO)}
\label{sec:mno}

\paragraph{Setup.}
Given initial conditions \(\{u_0^{(i)}\}\) and ensemble realizations \(\{u_T^{(i,j)}\}_{j=1}^{N_{\mathrm{ens}}}\), we learn an operator
\[
	\mathcal{G}_{\theta,\phi} : L^2(D) \to \mathcal{P}(L^2(D)),
\]
mapping \(u_0 \mapsto \mathcal{L}(u_T \mid u_0)\) through a learned mean \(m_\theta\) and covariance operator \(\Gamma_\phi\).

Deterministic neural operators cheaply predict conditional means, while ensembles, diffusion models, and learned stochastic PDEs target fuller laws by drawing repeated samples. When training data already carry realized stochasticity along an ensemble axis, a single operator can emit both the mean and the residual covariance in one forward pass, at the same complexity class as a deterministic surrogate, and MNO learns that residual structure from data rather than fixing it through a hand-designed noise prior.

\paragraph{Classical motivation.}
The mean/residual split is classical. On a filtered probability space, a square-integrable special semimartingale admits a canonical decomposition into predictable finite variation and local-martingale parts \citep{Protter_2005}. For It\^o SPDEs, this is the familiar \(A_t=\int_0^t\mu(u_s,s)ds\) and \(M_t=\int_0^t\sigma(u_s,s)dW_s\), and MNO uses the terminal consequences of this split as an architectural prior. Fractional Brownian motion with \(H\neq1/2\) falls outside the semimartingale class \citep{Mandelbrot_VanNess_1968,Biagini_Hu_etal_2008}, so the rough-volatility experiments use the same form as a terminal mean/covariance factorization.

\paragraph{Doob-Meyer structural prior.}
The Doob-Meyer split thereby motivates a neural operator with three architectural commitments.
\begin{enumerate}
	\item \emph{Deterministic drift}. The drift \(\mathcal{A}_\theta(u_0,t)\) is parameterized as a deterministic function of \((u_0,t)\), which is stronger than the classical predictable condition but is the natural restriction for Markovian dynamics and for terminal marginals conditioned on \(u_0\).
	\item \emph{Zero-mean stochasticity}. The residual is represented through a parameterized covariance operator \(\Gamma_\phi(u_0,t)\) and a centered residual law, which yields \(\mathbb{E}[M_t\mid u_0]=0\) by construction, while the residual distribution remains a modeling choice, instantiated as Gaussian in our experiments.
	\item \emph{Consistent initialization}. Initial consistency is enforced by a temporal gate that drives both \(\mathcal{A}_\theta\) and \(\Gamma_\phi\) to zero at \(t=0\), so that \(u_t=u_0\) there.
\end{enumerate}

\paragraph{Realization.}
Given \(u_0 \in L^2(D)\) and query time \(t \in [0,T]\), MNO defines
\[
	m_\theta(u_0,t) = u_0 + \mathcal{A}_\theta(u_0,t),
	\qquad
	\Gamma_\phi(u_0,t) = B_\phi(u_0,t)^\top B_\phi(u_0,t),
\]
through a learned drift increment \(\mathcal{A}_\theta\) and a low-rank factor \(B_\phi\). For a \(C\)-channel field on a grid of \(N_x\) points, \(B_\phi(u_0,t)\) has shape \(r \times (C N_x)\). Equivalently, its entries are channel-aware basis fields \(B_k(c,x;u_0,t)\). The diagonal variance used for moment prediction is
\[
	\mathrm{Var}_\phi[u_t(c,x)\mid u_0]
	= \sum_{k=1}^r B_k(c,x;u_0,t)^2.
\]
For 2D domains, \(x\) is replaced by \((x,y)\) and the factor is flattened over \(C N_x N_y\). Samples take the form
\[
	u_t(x) = u_0(x) + \mathcal{A}_\theta(u_0,t)(x) + \sum_{k=1}^r B_k(x;u_0,t)\,\xi_k,
\]
for any centered coefficient vector \(\xi=(\xi_1,\dots,\xi_r)\) independent of \(u_0\) with \(\operatorname{Cov}(\xi)=I_r\). For any such \(\xi\),
\begin{align*}
	\mathbb{E}[u_t \mid u_0] &= m_\theta(u_0,t), \\
	\mathrm{Cov}(u_t \mid u_0) &= \Gamma_\phi(u_0,t),
\end{align*}
so the conditional mean and covariance are determined by the architecture alone.

The architecture has two FNO operator heads plus a temporal gate, with one head emitting \(\mathcal{A}_\theta\) and the other emitting \(rC\) residual-factor channels that reshape into \(B_\phi\).

\paragraph{Residual law.}
We instantiate \(\xi \sim \mathcal{N}(0, I_r)\) throughout this paper, giving
\[
	\mathcal{G}_{\theta,\phi}(u_0,t)
	= \mathcal{N}\big(m_\theta(u_0,t), \Gamma_\phi(u_0,t)\big).
\]
Gaussian \(\xi\) admits a closed-form negative log-likelihood and matches the moment-level metrics we report. Heavier-tailed or jump residuals would change the sampling law and training objective while preserving the positive semidefiniteness, rank control, and centered initialization supplied by the factorization.

\subsection{Covariance Operator}
We model the stochastic residual through a low-rank factor
\[
	M_\phi(u_0,t) = B_\phi(u_0,t)^\top \xi,\qquad \xi \sim \mathcal{N}(0,I_r).
\]

The head outputs \(rC\) channels in 1D and reshapes them as \(B \in \mathbb{R}^{r \times C \times N_x}\), while in 2D it outputs \(B \in \mathbb{R}^{r \times C \times N_x \times N_y}\). Flattening the channel and spatial dimensions gives \(\Gamma_\phi = B^\top B\), hence positive semidefiniteness is automatic and no eigenvalue parameterization or Gram-Schmidt step is needed. The variance field used by the loss and by reported moment metrics is the diagonal
\[
	\operatorname{diag}\Gamma_\phi(c,x) = \sum_{k=1}^r B_k(c,x)^2.
\]
The channel-aware shape matters in multi-field systems such as Gray-Scott, where the \(u\) and \(v\) channels can have distinct variance fields.

\subsection{Drift Operator}

We parameterize the drift operator \(\mathcal{A}_\theta(u_0,t)\) with an FNO backbone applied to the initial condition concatenated with a constant time channel. All reported experiments use this implementation. It keeps the usual resolution-transfer behavior of Fourier neural operators while separating the deterministic mean head from the stochastic factor head. A temporal gate \(g(t)=1-\exp(-|\alpha|t)\) multiplies both heads, enforcing \(A(u_0,0)=0\) and \(B_\phi(u_0,0)=0\).

\subsection{Theoretical Guarantees}

We summarize the architectural guarantees and approximation scope of the implemented MNO. Supporting details are deferred to the appendices.

\begin{theorem}[Drift Approximation Scope, informal]\label{thm:drift-approx-informal}
	Under the standard approximation assumptions for FNO-type architectures on compact families of sufficiently regular functions, the drift head can approximate continuous solution operators on bounded spatial domains. The present guarantee is the usual neural-operator approximation scope of the implemented FNO drift head. Details and limitations are summarized in Appendix~\ref{app:separable-kernel}, with a separable/Mercer formulation presented as an auxiliary prior in Appendix~\ref{app:auxiliary-priors}.
\end{theorem}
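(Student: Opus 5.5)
The plan is to reduce the statement to the universal approximation theorem for Fourier neural operators of Kovachki, Lanthaler and Mishra, applied to the drift increment rather than the full solution map. The target is the operator $\mathcal{A}^\star(u_0,t) = \mathbb{E}[u_t\mid u_0]-u_0$, where $u_0$ ranges over a compact set $K\subset H^s(D)$ and $t\in[0,T]$. We place $D$ in a periodic box, or extend from it, so that the FNO setting applies. We assume $\mathcal{A}^\star : K\times[0,T]\to H^{s'}(D)$ is continuous. Since the drift head takes $(u_0,t)$ as input, with $t$ entering as a constant channel, we regard $(u_0,t)\mapsto \mathcal{A}^\star(u_0,t)$ as a continuous operator on the compact set $K\times[0,T]$. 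Here $[0,T]$ is embedded as the set of constant functions. This set is compact in $H^s(D)^2$ as a product of compacts.

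\textbf{Step 1: approximate the ungated target.} We invoke FNO universality on this compact set. For every $\varepsilon>0$ it gives an FNO $\mathcal{N}$ with
\[
\sup_{(u_0,t)\in K\times[0,T]} \|\mathcal{A}^\star(u_0,t)-\mathcal{N}(u_0,t)\|_{L^2(D)}<\varepsilon .
\]

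\textbf{Step 2: account for the temporal gate.} The implemented head outputs $g(t)\,\mathcal{N}(u_0,t)$ with $g(t)=1-e^{-|\alpha|t}$. We use that $\mathcal{A}^\star(u_0,0)=0$ and assume a uniform modulus, for instance $\|\mathcal{A}^\star(u_0,t)\|\le L t$ on $K$, which holds when the generator's drift is bounded on $K$. We fix $\delta>0$ and split time into $[0,\delta]$ and $[\delta,T]$. On $[\delta,T]$, $g$ is bounded below by $1-e^{-|\alpha|\delta}$. So we approximate $\mathcal{A}^\star/g$, which is continuous there, in Step 1 and multiply back, using $g\le 1$. On $[0,\delta]$, both $\mathcal{A}^\star$ and $g\mathcal{N}$ are $O(\delta)$ once $\mathcal{N}$ is bounded. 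A cleaner alternative is to approximate $\mathcal{A}^\star/g$ on all of $[0,T]$. The quotient extends continuously to $t=0$ as $\partial_t\mathcal{A}^\star(u_0,0)/|\alpha|$ whenever $\mathcal{A}^\star$ is $C^1$ in $t$. I would state the result under that $C^1$ hypothesis.

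\textbf{Step 3: discretization and limitations.} We note that the guarantee is existential and uniform on $K$ only. We also note that grid transfer follows from the FNO being defined on functions, with aliasing error controlled as in Kovachki et al.

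I expect the main obstacle to be Step 2: the gate forces the head to approximate $\mathcal{A}^\star/g$, which may be singular at $t=0$ unless $\mathcal{A}^\star$ vanishes at least linearly in $t$. Rough drivers ($H<1/2$) affect only the covariance, not the mean, so the linear-vanishing assumption is plausible for the drift. The informal theorem would nonetheless be stated under it, with rough cases listed as a limitation.
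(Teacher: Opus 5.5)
Your Step~1 is essentially all the paper itself offers. Appendix~\ref{app:separable-kernel} gives no argument beyond three things: it defers to the standard universal-approximation behaviour of $\operatorname{FNO}_\theta([u_0,t])$, it notes that the gate forces $\mathcal{A}_\theta(u_0,0)=0$, and it declares approximation quality empirical. The Schmidt-truncation result of Appendix~\ref{app:auxiliary-priors} concerns a different, auxiliary head. Your explicit reduction (time as a constant channel, compactness of $K\times[0,T]$, periodic extension of $D$) is a faithful and more careful version of that. The paper never checks whether multiplying by $g$ costs expressivity, so your Step~2 goes beyond it. That is where the proposal is weak: the split argument as written has a hole, and your conclusion that a $C^1$-in-time or $\|\mathcal{A}^\star(u_0,t)\|\le Lt$ hypothesis is needed is mistaken. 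Your version therefore proves less than the statement claims.

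The hole is that universality applied on $K\times[\delta,T]$ says nothing about $\mathcal{N}$ on $[0,\delta)$. So ``$g\mathcal{N}=O(\delta)$ once $\mathcal{N}$ is bounded'' needs a bound independent of $\delta$ and $\varepsilon$, which you never arrange. Approximating a clamped target fixes this and removes the extra hypothesis at the same time. For $\delta\in(0,T]$ let
\[
F_\delta(u_0,t)=\frac{\mathcal{A}^\star(u_0,\max(t,\delta))}{g(\max(t,\delta))},
\]
which is continuous on all of $K\times[0,T]$. Take an FNO $\mathcal{N}$ with $\sup_{K\times[0,T]}\|F_\delta-\mathcal{N}\|_{L^2}<\varepsilon$. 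Put $\omega(\delta)=\sup_{u_0\in K,\,s\le\delta}\|\mathcal{A}^\star(u_0,s)\|_{L^2}$; it tends to $0$ by uniform continuity on the compact set and $\mathcal{A}^\star(\cdot,0)=0$.
\begin{itemize}
\item For $t\ge\delta$, the error of $g\mathcal{N}$ is $g(t)\|\mathcal{N}-F_\delta\|\le\varepsilon$.
\item For $t<\delta$, monotonicity of $g$ gives $g(t)\|F_\delta(u_0,t)\|=\frac{g(t)}{g(\delta)}\|\mathcal{A}^\star(u_0,\delta)\|\le\omega(\delta)$, so the error is at most $\varepsilon+2\omega(\delta)$.
\end{itemize}
Choosing $\delta$ first and then $\varepsilon$ gives uniform approximation for every continuous target vanishing at $t=0$. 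That is the generality the statement asserts.

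This is not a cosmetic improvement, because your extra hypothesis can fail for the intended targets. For the heat semigroup, $\|e^{t\Delta}u_0-u_0\|_{L^2}=O(t)$ forces $u_0\in H^2$. Boundedness of the drift on $K$ does not control trajectories that leave $K$. And in nonlinear dynamics the noise does feed into the conditional mean, so ``rough drivers affect only the covariance'' does not justify linear vanishing.
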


\begin{theorem}[Covariance Operator Approximation, informal]\label{thm:cov-approx-informal}
	For any positive semidefinite covariance operator whose effective rank is at most \(r\), a factor \(B\) exists such that \(\Gamma=B^\top B\). Truncating a trace-class covariance to rank \(r\) gives the usual spectral tail error, and MNO learns such a factor directly, bypassing explicit Mercer eigenpair recovery. The architectural guarantee is positivity and rank control, while exact eigenfunction recovery lies outside the claim. Full discussion appears in Appendix~\ref{app:covariance}.
\end{theorem}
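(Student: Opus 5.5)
The plan is to prove the two claims of Theorem~\ref{thm:cov-approx-informal} separately: exact factorization at rank at most \(r\), and the truncation error for a general trace-class covariance. Both go through the spectral theorem for compact self-adjoint operators. Let \(\Gamma\) be a positive semidefinite, trace-class covariance operator on \(H=L^2(D)\). In the discretized setting this is a PSD matrix on \(\mathbb{R}^{CN_x}\).

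\textbf{Spectral expansion.} A trace-class operator is compact, so the spectral theorem gives an orthonormal system \(\{e_k\}\) and eigenvalues \(\lambda_1\ge\lambda_2\ge\dots\ge0\) with \(\sum_k\lambda_k=\operatorname{tr}\Gamma<\infty\), such that
\[
\Gamma=\sum_k\lambda_k\, e_k\otimes e_k .
\]
Nonnegativity of the \(\lambda_k\) is exactly positive semidefiniteness.

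\textbf{Exact factorization at rank at most \(r\).} If the rank is at most \(r\), then \(\lambda_k=0\) for \(k>r\). Define \(B\colon H\to\mathbb{R}^r\) by \((Bf)_k=\sqrt{\lambda_k}\,\langle e_k,f\rangle\). Equivalently, \(B\) has rows \(B_k=\sqrt{\lambda_k}\,e_k\). Then
\[
B^\top B=\sum_{k\le r}\lambda_k\, e_k\otimes e_k=\Gamma .
\]
This proves existence. The diagonal formula \(\operatorname{diag}\Gamma(c,x)=\sum_k B_k(c,x)^2\) follows by evaluating the kernel on the diagonal. In infinite dimensions it holds almost everywhere via Mercer's theorem when the kernel is continuous, and pointwise in the discretization.

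\textbf{Non-uniqueness.} I would also note that \(B\) is determined only up to \(B\mapsto QB\) with \(Q\in O(r)\), since \((QB)^\top QB=B^\top B\). This is the precise sense in which eigenfunctions are not identified by the architecture: the learned rows need not be the \(e_k\), only span the same subspace.

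\textbf{Truncation for general trace-class \(\Gamma\).} Set \(\Gamma_r=\sum_{k\le r}\lambda_k\, e_k\otimes e_k\), which is realized by the factor above. Then \(\Gamma-\Gamma_r\) is PSD with eigenvalues \(\{\lambda_k\}_{k>r}\). This gives the error bounds
\[
\|\Gamma-\Gamma_r\|_{\mathrm{op}}=\lambda_{r+1},\qquad
\|\Gamma-\Gamma_r\|_{1}=\sum_{k>r}\lambda_k,\qquad
\|\Gamma-\Gamma_r\|_{\mathrm{HS}}^2=\sum_{k>r}\lambda_k^2 .
\]
The tail sum tends to zero as \(r\to\infty\) because \(\Gamma\) is trace class.

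\textbf{Optimality (main obstacle).} The step needing care is the claim that this is the "usual" error, meaning that no rank-\(r\) PSD factor \(B^\top B\) does better. I would invoke the Eckart--Young--Mirsky theorem for compact operators (Schmidt's theorem). For operator norm and Hilbert--Schmidt norm this follows from Weyl's inequality \(\lambda_{k+r}(\Gamma)\le\lambda_k(\Gamma-\Gamma')+\lambda_{r+1}(\Gamma')\) with \(\Gamma'=B^\top B\) of rank at most \(r\), so that \(\lambda_{r+1}(\Gamma')=0\). For the trace norm I would use the corresponding Ky Fan / Mirsky inequality for singular values.

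\textbf{Scope.} The architectural content of the theorem is therefore only positivity and rank control. It makes no claim about learnability or about recovering the \(e_k\).
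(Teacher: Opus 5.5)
Your argument is correct, and its core is the paper's: the formal version (Theorem~\ref{thm:cov-approx}) builds the factor exactly as you do, \(B=\Lambda^{1/2}Q^\top\) from the spectral decomposition, which is your rows \(\sqrt{\lambda_k}\,e_k\). The tail claim is Proposition~\ref{prop:aux-mercer}'s Hilbert--Schmidt identity \(\sum_{k>r}\lambda_k^2\).

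You depart from the paper in four ways:
\begin{enumerate}
\item The paper stays deliberately finite-dimensional, because it is describing the implemented \(r\times n\) factor. You work on \(H\) directly, which matches the informal statement's ``operator'' language more literally.
\item You add the operator- and trace-norm tails and, more substantively, Eckart--Young--Mirsky optimality via Weyl's inequality. The paper never proves optimality, and it is what actually justifies calling the truncation error ``usual'' in the best-approximation sense. You do not need Ky Fan for the trace norm: the same Weyl bound \(\lambda_{k+r}(\Gamma)\le\lambda_k(\Gamma-\Gamma')\) suffices, since \(\|\Gamma-\Gamma'\|_1\) dominates the sum of the positive eigenvalues of \(\Gamma-\Gamma'\).
\item You prove the \(O(r)\) rotational non-identifiability, which the paper only asserts.
\item You omit the forward direction, which is the first half of the paper's proof. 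It is precisely the ``architectural guarantee'' in the statement: for an \emph{arbitrary} learned \(B\), \(B^\top B\) is symmetric, \(\langle B^\top Bf,f\rangle=\|Bf\|^2\ge0\), and \(\operatorname{rank}(B^\top B)\le\operatorname{rank}(B)\le r\).
\end{enumerate}
Your Scope paragraph asserts this forward direction, and your Weyl step silently uses the rank bound, so state these two lines explicitly.
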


\begin{theorem}[Resolution Invariance, informal]\label{thm:resolution-informal}
	Let \(\mathcal{G}_{\theta,\phi}^{(n)}\) denote the MNO operator discretized on a grid of size \(n\). When the FNO heads are evaluated consistently across grids and the learned factor fields have stable continuous limits, the predicted mean and variance fields can be evaluated zero-shot on new resolutions. This architectural design goal is tested empirically in \S\ref{sec:experiments}, while implementation-independent \(O(n^{-2})\) rates for arbitrary learned weights are outside our scope.
\end{theorem}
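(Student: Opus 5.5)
The plan is to reduce the statement to the discretization consistency of the two FNO heads. Everything else in the MNO pipeline is pointwise, so consistency will propagate from the heads to the predicted moments. Concretely, write the continuous MNO as the pair \((m_\theta,\{B_k\}_{k=1}^r)\) of function-valued outputs, where
\[
m_\theta(u_0,t)=u_0+g(t)\,\mathcal{A}_\theta(u_0,t), \qquad B_k=g(t)\,\tilde B_k(u_0,t),
\]
and \(\tilde B_k\) are the reshaped channels of the factor head. The grid-\(n\) operator \(\mathcal{G}^{(n)}_{\theta,\phi}\) uses the same weights. It replaces the Fourier transform by the DFT, truncates to the same set of retained modes, and evaluates the pointwise lifts, projections and nonlinearities at grid points. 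The first step is therefore a layerwise statement. If an FNO layer's input field is well approximated by its grid samples, the output is as well, because the pointwise parts act identically on samples and the spectral convolution only touches finitely many modes. Those modes are approximated by the DFT up to an aliasing/quadrature error that vanishes as \(n\to\infty\) for inputs in a compact family of sufficiently regular functions. These are the same assumptions already invoked for Theorem~\ref{thm:drift-approx-informal}.

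Next, I would chain this over the finitely many layers by induction, using Lipschitz continuity of the activations. This gives
\[
\max_{x\in\text{grid}_n}\bigl|\mathcal{A}^{(n)}_\theta(u_0,t)(x)-\mathcal{A}_\theta(u_0,t)(x)\bigr|\to 0,
\]
and the same convergence for every \(\tilde B_k\), uniformly over the compact input family. The temporal gate \(g(t)\) and the additive \(u_0\) are grid-independent, so the mean field inherits this convergence immediately. For the variance \(\sum_k B_k(c,x)^2\), I would use the identity \(a^2-b^2=(a-b)(a+b)\) together with the boundedness of the factor fields on the compact family. The paper's hypothesis of ``stable continuous limits'' is exactly what supplies that bound. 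Since \(r\) is fixed, the variance error is at most \(2r\) times the sup bound on the factors times the factor error, so it also vanishes.

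Zero-shot evaluation then follows by evaluating on a new grid \(n'\) and comparing both \(n\) and \(n'\) to the common continuous limit. The covariance \(\Gamma^{(n)}=B^\top B\) stays PSD at every resolution, as in Theorem~\ref{thm:cov-approx-informal}, so the predicted law \(\mathcal{N}(m,\Gamma)\) is well-defined on every grid.

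The main obstacle is the aliasing step. Pointwise nonlinearities create high-frequency content that the DFT folds back onto the retained modes, so the consistency rate depends on the regularity of the intermediate fields. This is why the statement explicitly declines to give a uniform \(O(n^{-2})\) rate. My first attempt would be to assume the intermediate activations stay in a Sobolev ball \(H^s\) with \(s>d/2\), which bounds the aliasing error by roughly \(n^{-s}\). I would then state only qualitative convergence and defer any rate to the empirical tests in \S\ref{sec:experiments}.
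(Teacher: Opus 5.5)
Your argument is sound under the regularity assumptions you add, but it proves a stronger statement than the paper, by a different route. The paper's proof (Theorem~\ref{thm:resolution-invariance} in the appendix) is purely architectural. It makes three observations: the Fourier layers multiply a fixed set of truncated modes, so the same weights apply to whatever grid is supplied; the residual head emits its $rC$ channels on that grid; and the variance $\sum_k B_k^2$ is a pointwise sum of squares, so no grid-specific dense covariance matrix or eigendecomposition is ever formed. From these it concludes only \emph{evaluability} across grids and defers every accuracy statement to the transfer diagnostics. You instead prove discretization \emph{consistency}. You bound the DFT aliasing/quadrature error on the retained modes layer by layer and propagate it by induction through Lipschitz activations. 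The bound then passes to the mean trivially and to the variance via $a^2-b^2=(a-b)(a+b)$; your constant should be $r(2M+\varepsilon)$ rather than $2rM$, since the discrete factor is only bounded by $M+\varepsilon$, which is harmless. The upshot is that predictions on grids $n$ and $n'$ approximate a common continuum limit. This gives real content to the phrase ``stable continuous limits'': the zero-shot outputs are provably, asymptotically, the same object, not merely computable. The cost is a set of hypotheses the paper deliberately avoids. You need inputs from a compact family of regular functions, and intermediate activations that stay in $H^s$ with $s>d/2$ (or at least uniformly Lipschitz). You also need grid-independent normalization and a periodic DFT setting. The result is also purely asymptotic and says nothing about coarse grids such as $32\times32$. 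Even your stronger conclusion is self-consistency of the model across grids, not accuracy against the true moments at the new resolution. That is why the paper stops at evaluability. Its own 2D transfer table shows MNO mean RMSE rising from $0.76$ to $8.44$ at $128\times128$, and the paper attributes this to the retained modes mapping to different physical wavenumbers. That indicates the consistency hypotheses your argument needs can fail for the trained implementation.
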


The implemented model guarantees zero initial residuals, zero-mean sampling, positive semidefinite low-rank covariance, and channel-aware variance fields, while the empirical question is whether these guarantees translate into accurate moments, calibrated covariance, and resolution-stable inference.

\section{Experiments}
\label{sec:experiments}


The experiments ask, in order, whether the residual head improves terminal distributional accuracy, whether the same factorization remains useful outside the semimartingale regime, and whether the moment representation preserves the computational advantage of neural operators. Table~\ref{tab:application-results} gives the consolidated answer, with the main cases discussed in the following subsections.

\begin{table}[!t]
	\caption{Consolidated benchmark results. MNO is evaluated against the most relevant comparator for each task family, using stochastic-surrogate baselines (Neural SPDE, Neural SDE) for 1D SPDE tasks, FNO for 2D operator tasks, and a conditional diffusion model with an FNO score backbone for generative efficiency. Lower is better for all metrics. Gray-Scott is included as a less favorable 2D case, with full 2D results in Appendix~\ref{app:additional-results}. The \(\phi^4\) row also serves as the SPDEBench \citep{Zhu_Li_etal_2026} evaluation, using the same data and an independently verified wrapper as described in Appendix~\ref{app:additional-results}.}
	\label{tab:application-results}
	\begin{center}
		\begin{small}
			\begin{sc}
				\resizebox{\linewidth}{!}{%
					\begin{tabular}{llcccc}
						\toprule
						Benchmark & Metric & Baseline method & Baseline value & MNO & \(\Delta\) \\
						\midrule
						Stochastic Burgers & \(W_2\) & Neural SPDE & 0.6483 & \textbf{0.0095} & \(68\times\) \\
						Rough Vol (\(H=0.1\)) & \(W_2\) & Neural SDE & 0.0668 & \textbf{0.0257} & \(2.6\times\) \\
						\(\phi^4\) / SPDEBench & \(W_2\) & Neural SPDE & 0.6572 & \textbf{0.0055} & \(120\times\) \\
						1D superresolution (\(n=128\)) & mean RMSE & Neural SPDE & 0.1957 & \textbf{0.0407} & \(4.8\times\) \\
						Generative efficiency & inference (s) & Diffusion (NFE\(=\)25) & 0.0213 & \textbf{0.0070} & \(3\times\) \\
						2D turbulent flow & mean RMSE & FNO & \textbf{0.6583} & 0.6687 & parity \\
						2D resolution transfer (\(n=128\)) & mean RMSE & FNO & 8.7421 & \textbf{8.4394} & parity \\
						2D Gray-Scott & mean RMSE & FNO & \textbf{0.0054} & 0.0345 & \(-6.3\times\) \\
						\bottomrule
					\end{tabular}
				}
			\end{sc}
		\end{small}
	\end{center}
	\vskip -0.1in
\end{table}

\subsection{Experimental Setup}

\subsubsection*{Baselines.}
We select baselines according to the scientific claim each experiment targets. For 1D stochastic-surrogate tasks, including Burgers, \(\phi^4\), and 1D superresolution, the primary comparator is Neural SPDE \citep{Salvi_Lemercier_etal_2022}, which directly models SPDE solutions, while Wiener-chaos-style and SDENO-style surrogates are retained in the appendix for completeness. For rough volatility, the primary comparators are Neural SDE \citep{Shen_Cheng_2025} and Neural CDE \citep{Kidger_Morrill_etal_2020}, which are the most principled sequential baselines for non-Markovian stochastic dynamics. For 2D operator tasks such as turbulent flow and resolution transfer, FNO \citep{Li_Kovachki_etal_2021} serves as the main baseline, with CNN-based baselines such as ResNet and U-Net appearing in the appendix. For the generative-efficiency comparison, the baseline is a conditional diffusion model with an FNO score backbone, trained under the same wall-clock budget as MNO and evaluated across an NFE sweep of \(\{25, 50, 100\}\). The appendix structural diagnostics use Noise-FNO as the baseline for resolution invariance.

\subsubsection*{Metrics.}
For stochastic tasks where distributional quality is the primary claim, the headline metric is the empirically estimated spatially-averaged Wasserstein-2 distance (\(W_2\)) between the predicted terminal ensemble and the ground-truth ensemble. For operator-learning tasks where mean-field accuracy is the relevant comparison (2D turbulence, resolution transfer), we use mean RMSE over test trajectories. The generative-efficiency row reports inference wall-clock time (seconds per evaluation on matched hardware), with \(W_2\) reported as a quality control. Variance RMSE, used for the appendix resolution-invariance and uncertainty diagnostics, measures the Frobenius distance between predicted and ground-truth variance fields in \emph{normalized} coordinates (matching the MNO training objective). Physical scaling follows by multiplying by the squared target standardization.

\subsubsection*{Data.}
Training budgets are heterogeneous across task families. For the standard 1D stochastic surrogate tasks (Burgers, \(\phi^4\), superresolution), we use 1000 training trajectories, 200 test trajectories, 192 ensemble members for ground-truth moment estimation, and 120 training epochs. For the generative-efficiency experiment, both MNO and the diffusion baseline are trained on the stochastic Burgers terminal ensemble under a matched 60-second wall-clock budget. With this budget MNO reaches its 500-epoch cap in 23.1 seconds and the diffusion baseline completes 21 epochs in 61.6 seconds, reflecting the per-epoch cost difference of the flattened-ensemble diffusion target. For the 2D runs reported here we use 1000 training trajectories, 100 test trajectories, 96 ensembles, and 120 epochs. Rough-volatility trajectories are generated via Davies-Harte fractional Brownian motion simulation at each \(H\) value in the grid, with the same 1000/200 train/test split, 192 ensembles, 120 epochs, and five evaluation seeds. Full hyperparameter details (architecture widths, learning rates, modes) are in Appendix~\ref{app:training}.

\subsection{Stochastic Burgers' Equation}

\begin{figure}[t]
	\centering
	\includegraphics[width=\columnwidth]{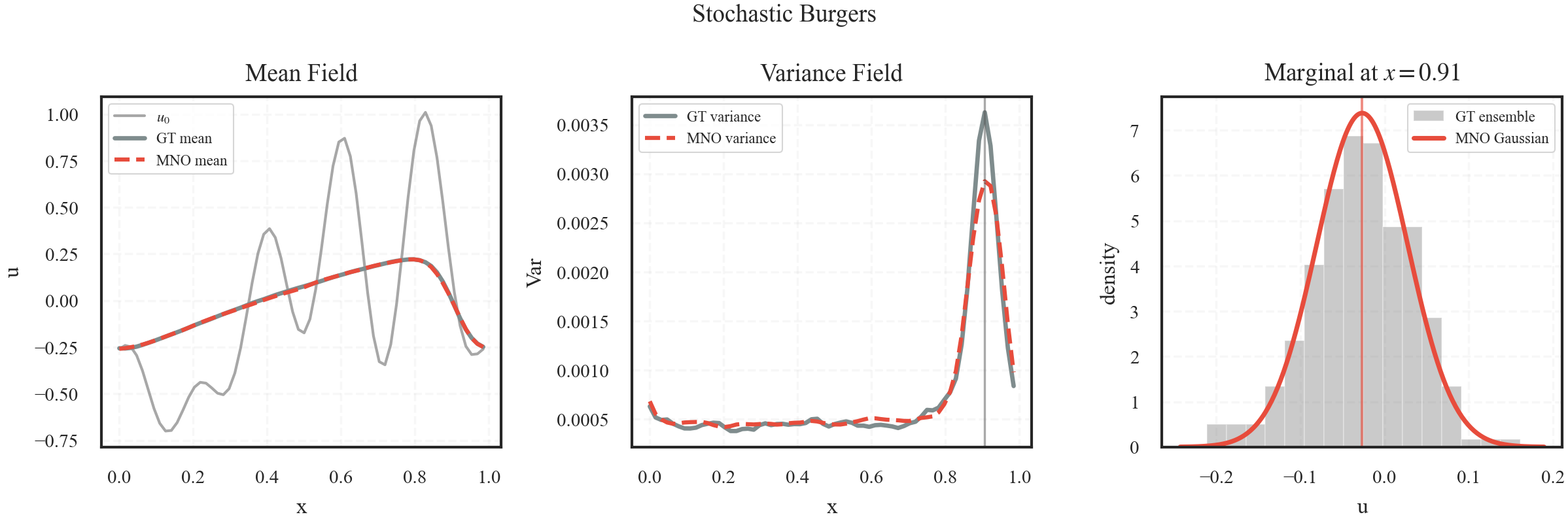}
	\caption{Stochastic Burgers' Equation results. For a representative test initial condition, the figure shows the terminal mean field, variance field, and pointwise marginal density. MNO recovers a nontrivial residual variance profile while remaining competitive with deterministic baselines on mean prediction.}
	\label{fig:case-burgers}
\end{figure}

MNO recovers a nontrivial variance profile from a single forward pass while matching the mean accuracy of a deterministic baseline. On the distributional metric, MNO achieves \(W_2 = 0.0095\), a \(68\times\) reduction over Neural SPDE (\(W_2 = 0.6483\)), which struggles to capture the stochastic structure despite end-to-end SPDE training. On the mean-prediction metric, MNO achieves mean RMSE \(= 0.0230\), close to FNO (\(0.0217\)). FNO lacks a variance head, and although its \(W_2 = 0.0179\) is strong because the mean is accurate, it provides no stochastic residual or variance field. The mean RMSE parity shows that the martingale head leaves the drift head essentially intact on this benchmark.

\subsection{Rough Volatility Dynamics}
Rough volatility is a deliberate stress test of the motivation. Fractional drivers with \(H<1/2\) fall outside the semimartingale setting, so the Doob-Meyer interpretation becomes a structural analogy, and the question becomes whether the same disciplined split can capture the terminal uncertainty profile induced by roughness. In this regime, MNO is a one-shot Gaussian approximation to the terminal marginal, with the mean trained by squared prediction error and the residual factor trained through Gaussian negative log-likelihood, variance consistency, residual centering, and factor regularization.

\begin{figure}[t]
	\centering
	\includegraphics[width=\columnwidth]{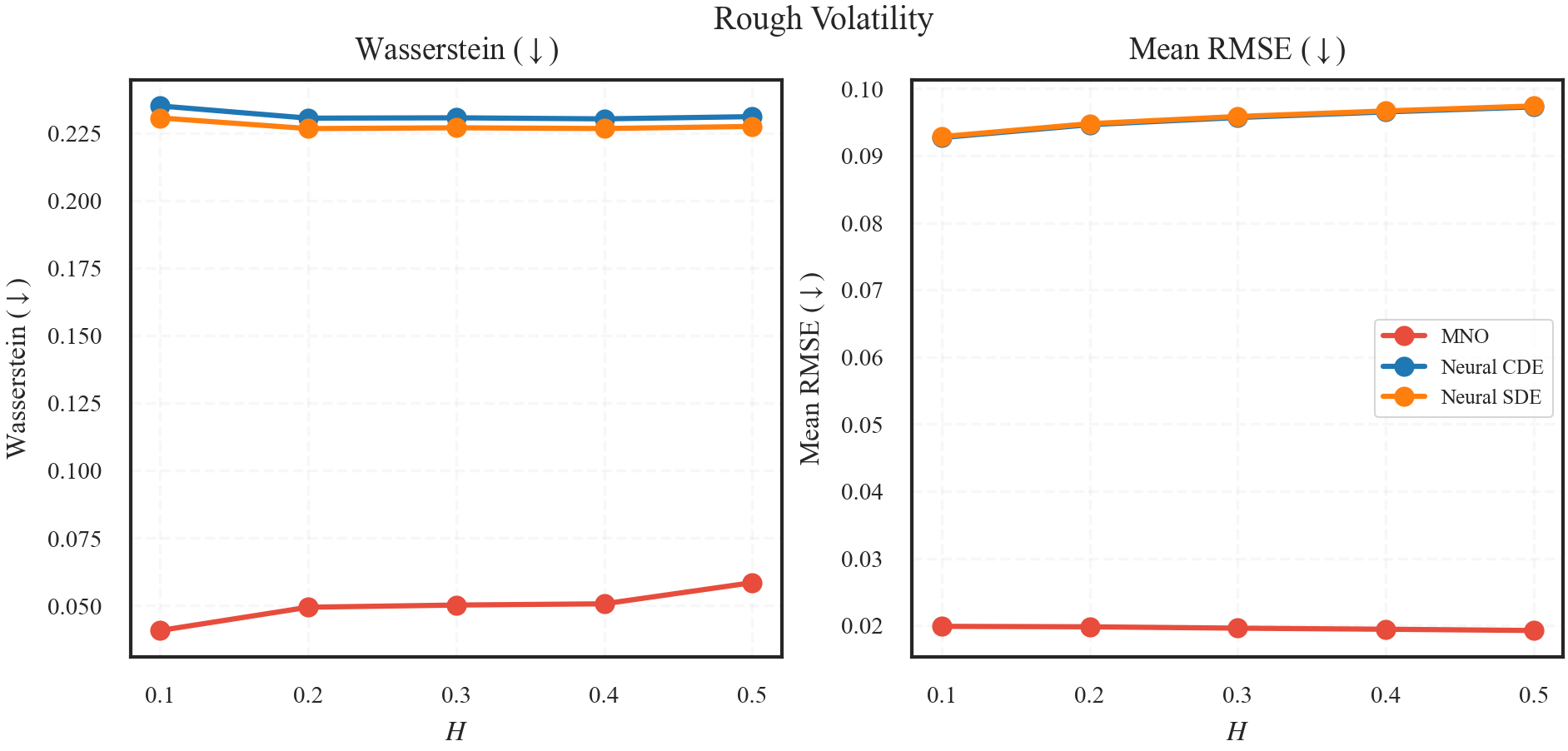}
	\caption{Rough volatility results: comparison of MNO against sequential stochastic baselines across varying Hurst parameters \(H\). MNO remains accurate in the deeply rough regime while targeting terminal marginals over pathwise roughness recovery.}
	\label{fig:case-rough-vol}
\end{figure}

We model the log-volatility process of rough Heston/Bergomi style \citep{Heston_1993, Bergomi_2004, Bayer_Friz_etal_2015} as \(d\log\sigma_t = \eta dW^H_t\), where \(W^H\) is a fractional Brownian motion with Hurst parameter \(H\).
We evaluate across \(H \in \{0.1, 0.2, 0.3, 0.4, 0.5\}\), spanning from the deeply rough regime through standard Brownian motion. The headline comparison uses \(H = 0.1\), where non-Markovian effects are strongest and sequential baselines are most challenged.

At \(H = 0.1\), MNO achieves mean \(W_2 = 0.0257\) on the terminal marginal across five seeds, compared to Neural SDE (\(W_2 = 0.0668\), a \(2.6\times\) gap) and Neural CDE (\(W_2 = 0.0675\)). The sequential baselines are optimized for pathwise fidelity and require the full trajectory history to estimate roughness, but in this fixed benchmark they produce worse terminal marginals than MNO's one-shot prediction. The measured claim is correspondingly narrow, since when terminal marginal distribution matters more than pathwise roughness recovery, MNO's structured Gaussian approximation can be the more accurate object. Across the tested \(H\)-sweep MNO remains ahead of the sequential baselines.

\subsection{Additional Benchmarks}

Beyond the two case studies above, Table~\ref{tab:application-results} summarizes the remaining headline benchmarks, and Figures~\ref{fig:phi4-results}-\ref{fig:generative-results} provide visualizations for three representative results.

\textbf{\(\phi^4\) field theory.} MNO achieves \(W_2 = 0.0055\) vs.\ Neural SPDE \(0.6572\) (Figure~\ref{fig:phi4-results}). The double-well \(\phi^4\) potential produces a broad terminal law, and MNO's low-rank residual factor captures its variance profile substantially better than the stochastic-surrogate baselines in the reported run.

\begin{figure}[t]
	\centering
	\includegraphics[width=0.8\columnwidth]{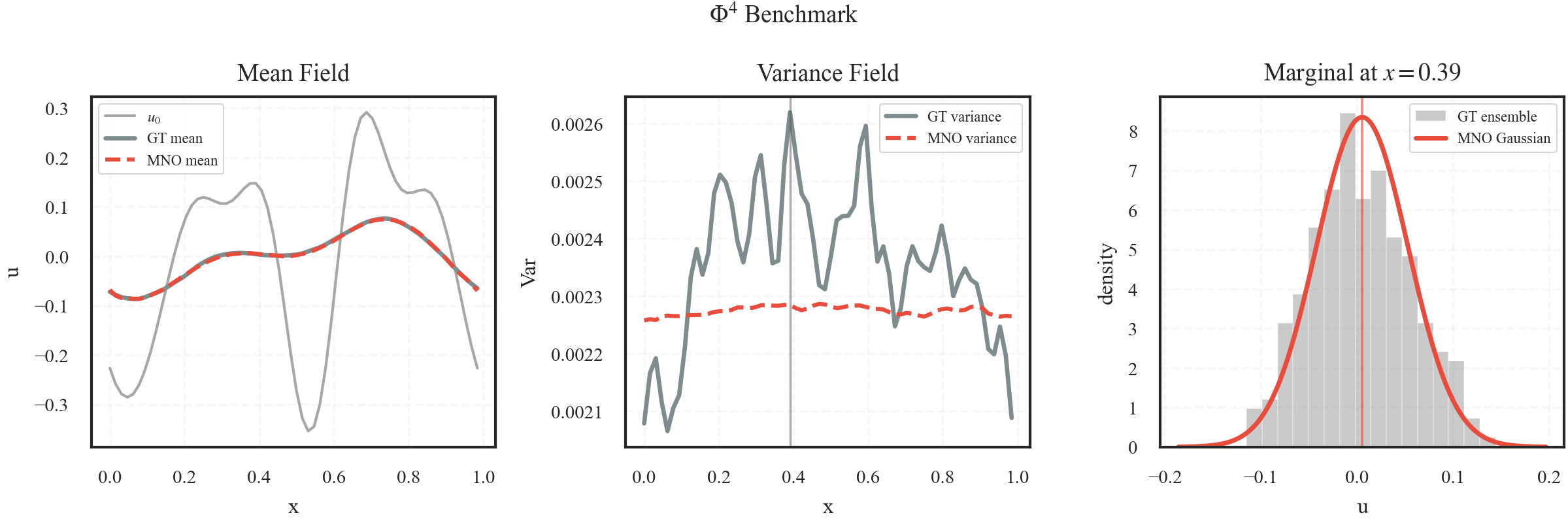}
	\caption{\(\phi^4\) field theory benchmark (SPDEBench). MNO achieves \(W_2 = 0.0055\) while Neural SPDE obtains \(W_2 = 0.6572\). The figure shows the learned terminal mean, variance field, and a representative marginal density induced by MNO's low-rank residual factor.}
	\label{fig:phi4-results}
\end{figure}

\textbf{1D zero-shot super-resolution.} MNO trained at resolution 32 generalizes zero-shot to resolution 128, achieving mean RMSE \(= 0.0407\) vs.\ Neural SPDE \(0.1957\) (Figure~\ref{fig:superres-results}). Neural SPDE degrades monotonically (\(0.102 \to 0.196\)) across the resolution sweep because its pathwise discretization is tied to the training grid. MNO's mean field transfers cleanly, and its variance field remains in the same error scale. At the finest grid, other baselines still lead on either mean or variance.

\begin{figure}[t]
	\centering
	\includegraphics[width=0.8\columnwidth]{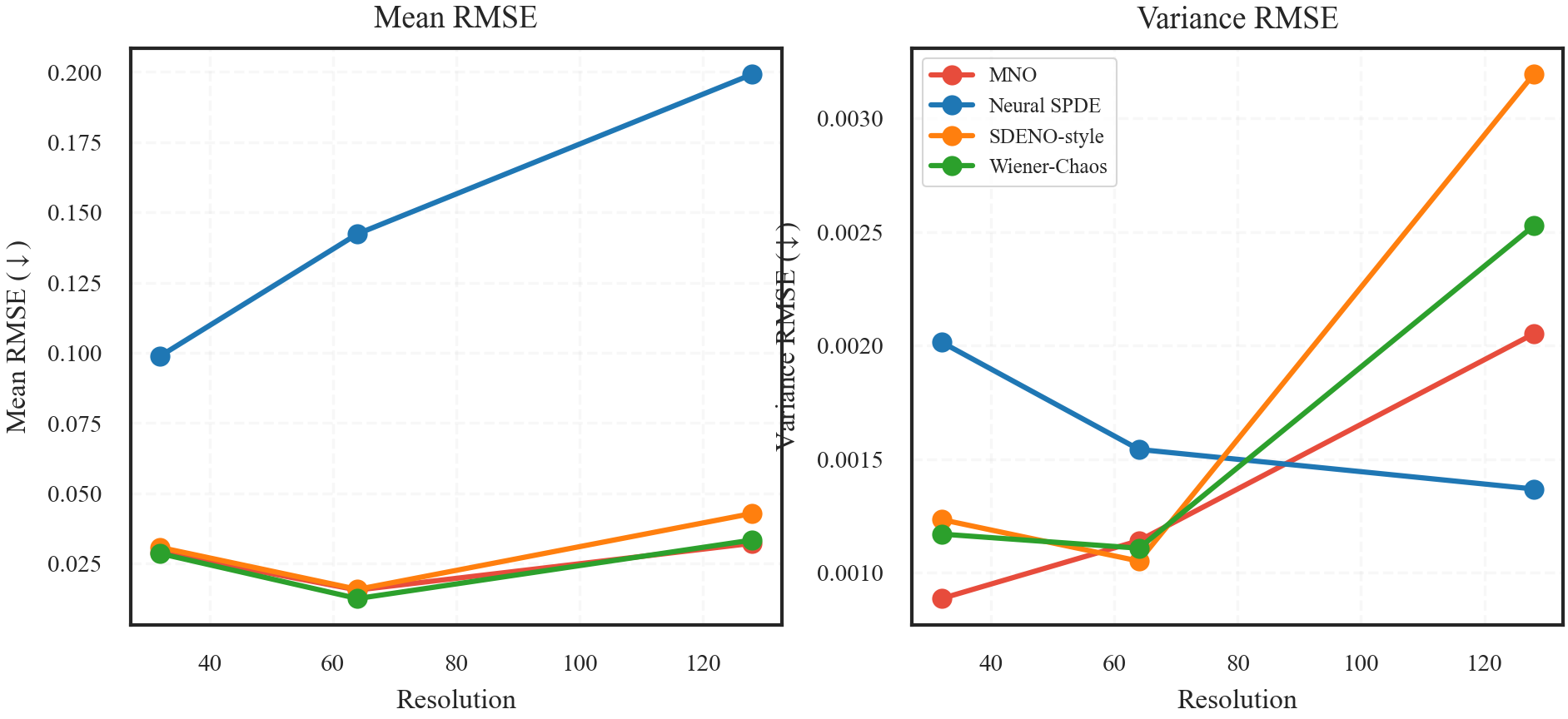}
	\caption{1D zero-shot super-resolution. MNO (trained at resolution 32) achieves stable or improving mean RMSE at resolutions 32/64/128, while Neural SPDE degrades monotonically. The variance row is reported as an uncertainty diagnostic, with no headline superiority claim at every resolution.}
	\label{fig:superres-results}
\end{figure}

\textbf{Generative efficiency.} MNO produces moments in \(7.0\times 10^{-3}\) s vs.\ \(2.1\times 10^{-2}\) s for a conditional diffusion baseline with an FNO score backbone at NFE\(=25\), where both models are trained under a matched 60-second wall-clock budget, reaching 500 MNO epochs in 23.1 seconds and 21 diffusion epochs in 61.6 seconds because of the per-epoch cost difference of the flattened-ensemble diffusion target. This gives a \({\sim}3\times\) inference speedup while achieving lower \(W_2\) (\(0.0081\) vs.\ \(0.1589\), Figure~\ref{fig:generative-results}). The comparison measures moment prediction time for MNO against sample generation time for diffusion, which is the operationally relevant comparison when applications require mean and covariance estimates such as risk envelopes or UQ bounds.

\begin{figure}[t]
	\centering
	\includegraphics[width=\columnwidth]{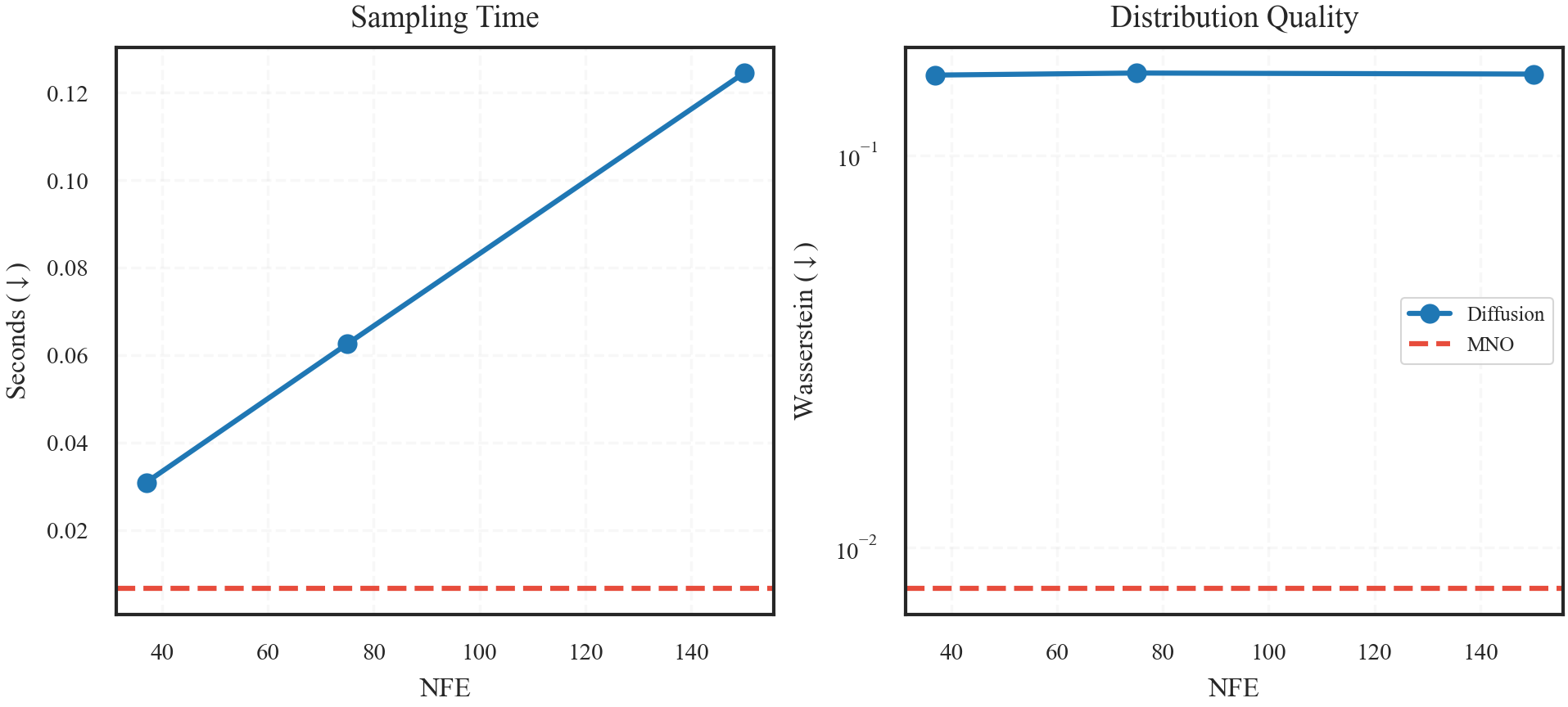}
	\caption{Generative efficiency comparison on stochastic Burgers'. MNO predicts terminal moments in a single forward pass (\(7.0\times10^{-3}\) s). The diffusion baseline requires \(2.1\times10^{-2}\) s at NFE\(=25\). MNO also achieves lower \(W_2\) (\(0.0081\) vs.\ \(0.1589\)) under a matched 60-second wall-clock training budget.}
	\label{fig:generative-results}
\end{figure}

\textbf{2D turbulent flow.} MNO is close to FNO on mean RMSE (\(0.6687\) vs.\ \(0.6583\)), while U-Net is substantially better on this particular task (\(0.0706\)). We treat this result as a dimensionality proof-of-concept for the channel/spatial low-rank factor, with no state-of-the-art mean-prediction claim. Full 2D results, including Gray-Scott, appear in Appendix~\ref{app:additional-results}.

\section{Discussion and Limitations}
\label{sec:discussion}

MNO learns terminal marginals, and pathwise stochastic-process modeling is outside its scope. The Martingale Mirror diagnostic makes this boundary visible because one-shot terminal marginals can be accurate while autoregressive reuse whitens rough temporal correlations toward Brownian scaling. The Doob-Meyer interpretation is valid as structural motivation in semimartingale regimes, whereas on rough volatility the correct reading is a useful terminal mean/covariance factorization.

The second limitation is expressivity of the experimental residual law. The architecture only requires a centered coefficient vector with known covariance, but our runs use a Gaussian, low-rank residual, which limits heavy tails, jumps, and multimodal laws. Other residual laws can be paired with the same \(B_\phi\), though they require different likelihoods, sampling rules, and validation targets. Because both heads are FNO-based, high-frequency deterministic structure can also be misplaced into variance, which is one plausible explanation for the weaker Gray-Scott result. MNO enforces terminal centering and PSD covariance while leaving conservation laws and the full filtration-level martingale tower property unenforced.




\bibliographystyle{plainnat}
\bibliography{references}

\appendix

\section{Structural Diagnostics}
\label{app:verification-suite}

This section reports implementation diagnostics for residual centering, autoregressive reuse, resolution transfer, head separation, and uncertainty decomposition. These checks record measured behavior in the reported runs, while pathwise martingality and global identifiability require stronger evidence.

\subsection{Diagnostic I: Residual Centering and Scaling}

\begin{figure}[t]
	\centering
	\includegraphics[width=0.8\textwidth]{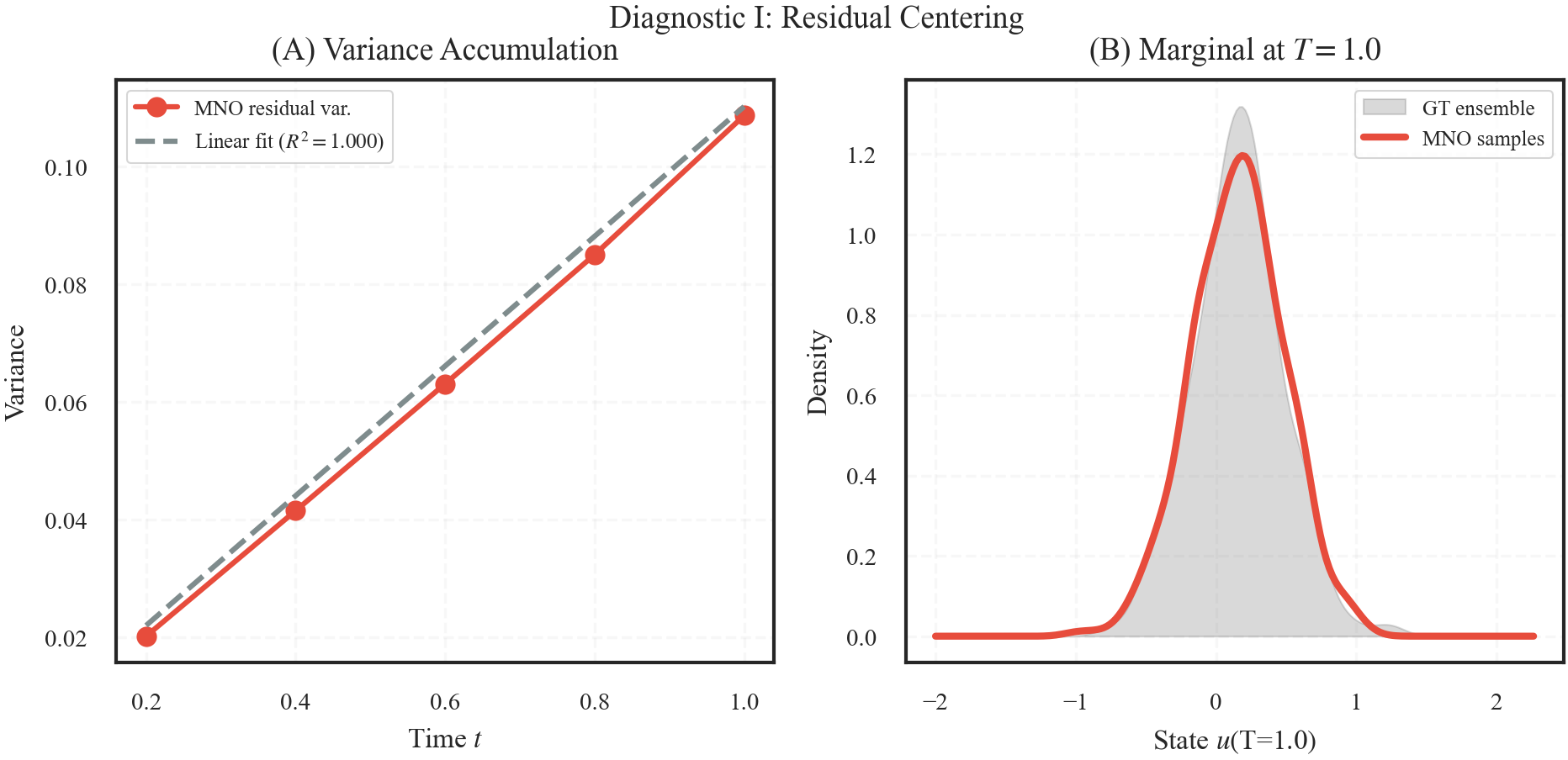}
	\caption{Diagnostic I, residual centering and scaling. The reported diagnostic records (i) near-zero temporal mean, (ii) approximately linear variance growth in the Brownian diagnostic, and (iii) weak correlation between selected residual increments and past-state features. These measurements check terminal residual behavior, short of full filtration-level martingality.}
	\label{fig:prop0}
\end{figure}

We evaluate whether the learned residual \(M_t = u_t - \hat{A}_t\) exhibits the terminal consequences expected of a centered residual by checking near-zero residual mean at all evaluation times, approximately linear variance growth in the Brownian diagnostic, and low empirical correlation between selected increments \(M_{t+\Delta t} - M_t\) and measured past-state features. All three checks pass in the reported run.

\subsection{Diagnostic II: Martingale Mirror and Non-Markovianity}
\label{subsec:mirror}
This subsection studies an \emph{autoregressive evaluation mode} that is distinct from the main one-shot definition of MNO. After training the one-shot operator \((u_0,t) \mapsto p(u_t \mid u_0)\), we recursively reuse the same model with the current predicted state as input, \(\hat{u}_{t_{n+1}} \sim \mathcal{G}_{\theta,\phi}(\hat{u}_{t_n}, \Delta t)\). Conditioning on the current state appears only in this rollout analysis, whereas the core model in the main text conditions on \(u_0\).

\begin{figure}[t]
	\centering
	\includegraphics[width=1\textwidth]{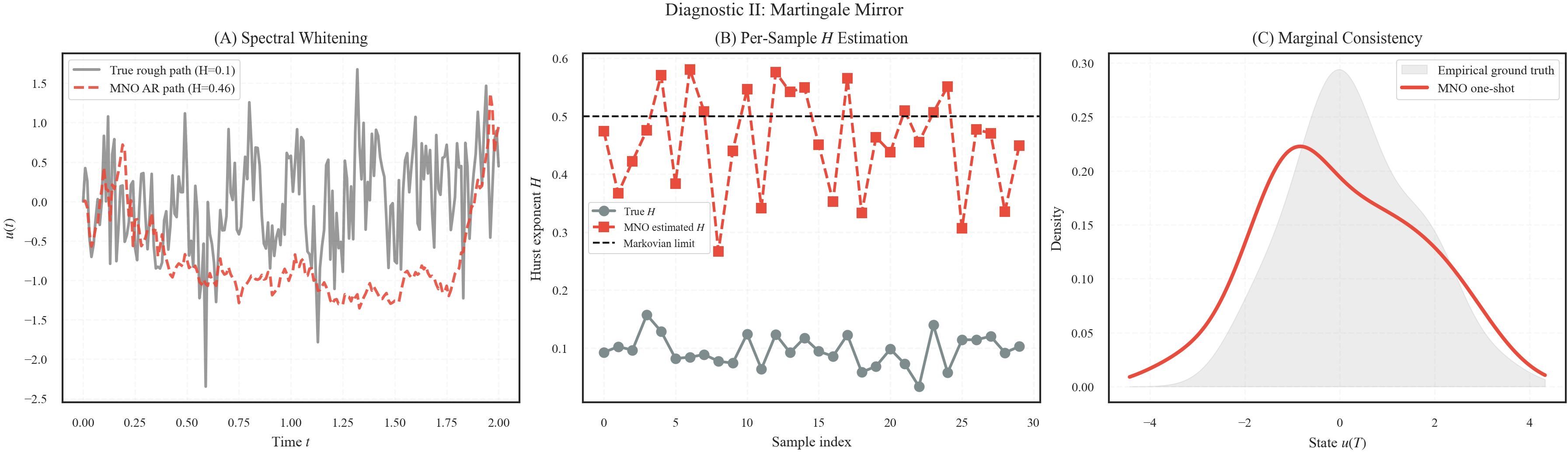}
	\caption{Diagnostic II, Martingale Mirror. The left panel shows one-shot MNO conditioned on \(u_0\) matching the plotted terminal marginal density for fBm with \(H=0.1\). The center panel shows autoregressive reuse producing a wider density, with variance growing closer to linearly than to \(t^{2H}=t^{0.2}\). The right panel shows AR paths with \(\hat{H}\approx0.486\), while ground-truth paths have \(\hat{H}\approx0.101\), indicating that repeated Markovian stepping changes the temporal scaling.}
	\label{fig:martingale-mirror}
\end{figure}

This diagnostic separates one-shot marginal accuracy from autoregressive path behavior for non-Markovian dynamics such as rough fractional Brownian motion (\(H<0.5\)).

As shown in Figure~\ref{fig:martingale-mirror}, we contrast one-shot evaluation, where MNO maps \(0 \to T\) in a single forward pass, with autoregressive stepping, where MNO iterates \(u_{t+\Delta t} = \mathcal{G}(u_t, \Delta t)\).

\textbf{Autoregressive scaling.} In the center panel, the one-shot model closely matches the plotted terminal marginal density at \(T\). The autoregressive rollout is wider because repeated Markovian stepping adds residual variance in a way closer to linear growth, whereas the target rough process has variance scaling \(t^{2H}=t^{0.2}\).

\textbf{Path scaling.} The right panel shows \(\hat{H}\approx0.486\) for AR paths, while the ground-truth paths have \(\hat{H}\approx0.101\). The measurement matches the training target, since MNO learns one-shot terminal marginals and leaves rough path-law preservation to a different objective.

\subsection{Additional Structural Diagnostics}

\begin{figure}[t]
	\centering
	\includegraphics[width=0.8\textwidth]{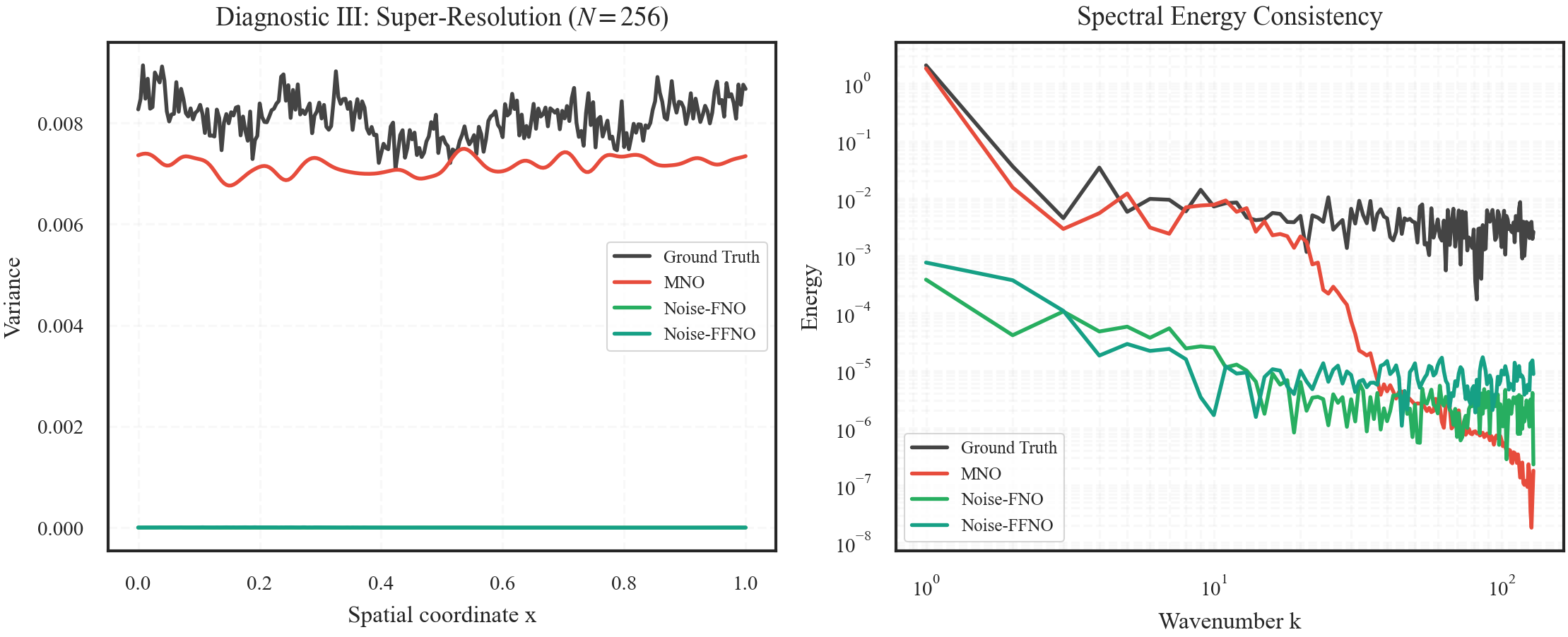}
	\caption{Diagnostic III, resolution-transfer variance. MNO is trained at resolution 64 and evaluated zero-shot through resolution 256. The left panel compares the highest-resolution variance fields, and the right panel compares Fourier energy spectra. At resolution 256, variance-field RMSE is \(0.00167\) for MNO vs.\ \(0.00853\) for Noise-FNO. The spectrum is reported qualitatively, with MNO tracking the low-frequency variance energy more closely while fine-scale discrepancies remain.}
	\label{fig:res-invariance}
\end{figure}

\textbf{Resolution-transfer variance.} We train MNO at spatial resolution 64 and evaluate zero-shot at resolutions 64, 128, and 256 on a 1D stochastic superresolution task. The comparison baselines are Noise-FNO and Noise-FFNO, which inject noise channels to produce stochastic outputs. Table~\ref{tab:prop2-full} reports the variance-field errors across resolutions.

\begin{table}[H]
	\centering
	\caption{Resolution-transfer variance diagnostic. Variance-field RMSE is computed against the empirical variance field at each evaluation resolution.}
	\label{tab:prop2-full}
	\begin{tabular}{lcccc}
		\toprule
		Resolution & True var.\ mean & MNO RMSE & Noise-FNO RMSE & Noise-FFNO RMSE \\
		\midrule
		64 & 0.00686 & 0.000553 & 0.006874 & 0.006877 \\
		128 & 0.00728 & 0.000449 & 0.007289 & 0.007290 \\
		256 & 0.00850 & 0.001666 & 0.008534 & 0.008534 \\
		\bottomrule
	\end{tabular}
\end{table}

\begin{figure}[t]
	\centering
	\includegraphics[width=0.6\textwidth]{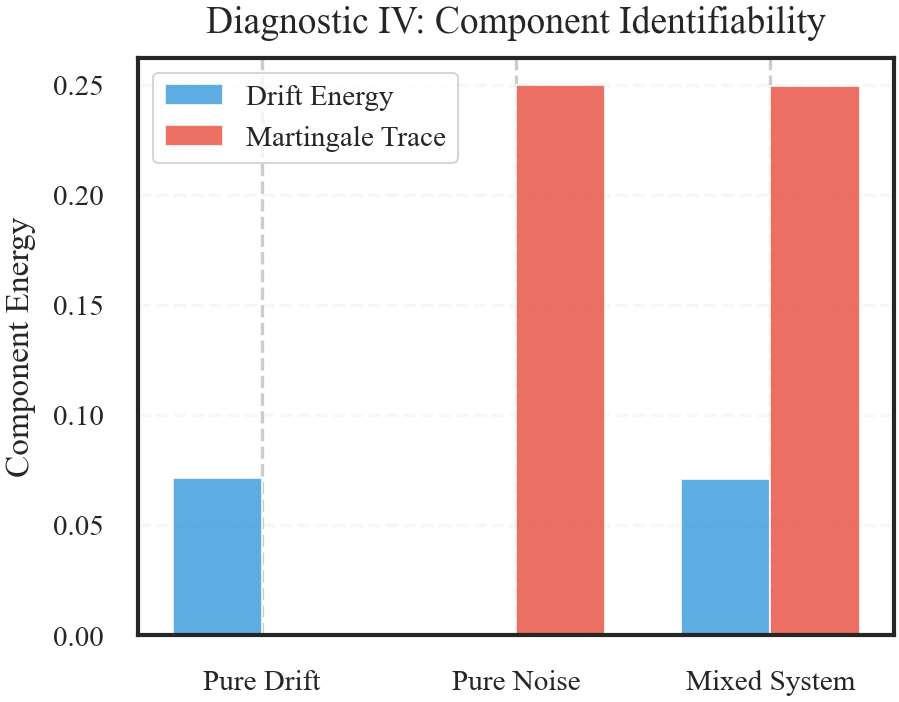}
	\caption{Diagnostic IV, head-separation audit. Across the three synthetic cases the trained MNO recovers the correct head energies, with drift-energy large only when drift is present (cases A and C), and covariance-energy large only when noise is present (cases B and C). The two heads are separable in this controlled setting.}
	\label{fig:identifiability}
\end{figure}
\nopagebreak

\begin{table}[H]
	\centering
	\caption{Head-separation diagnostic values. Cases A, B, and C denote pure drift, pure noise, and drift-plus-noise synthetic data, respectively.}
	\label{tab:head-separation}
	\begin{tabular}{lcc}
		\toprule
		Case & Drift-head energy & Covariance-head energy \\
		\midrule
		A & 0.07143 & \(7.22\times10^{-6}\) \\
		B & \(4.20\times10^{-6}\) & 0.24993 \\
		C & 0.07119 & 0.24986 \\
		\bottomrule
	\end{tabular}
\end{table}

\textbf{Head separation.} To test whether the two heads can be separated in simple regimes, we trained MNO on synthetic datasets generated by controlled drift/noise cases. The diagnostic shows the expected pattern: the drift head accumulates energy only when the data has a drift component (cases A and C), and the covariance head accumulates energy only when the data has a noise component (cases B and C). Head energies for the off-target case in each pair sit four orders of magnitude lower than the on-target value, so the two channels are empirically separable in this controlled setting, consistent with the identifiability statement up to the rotational ambiguity discussed in Appendix~\ref{app:covariance}.

\begin{figure}[t]
	\centering
	\includegraphics[width=0.6\textwidth]{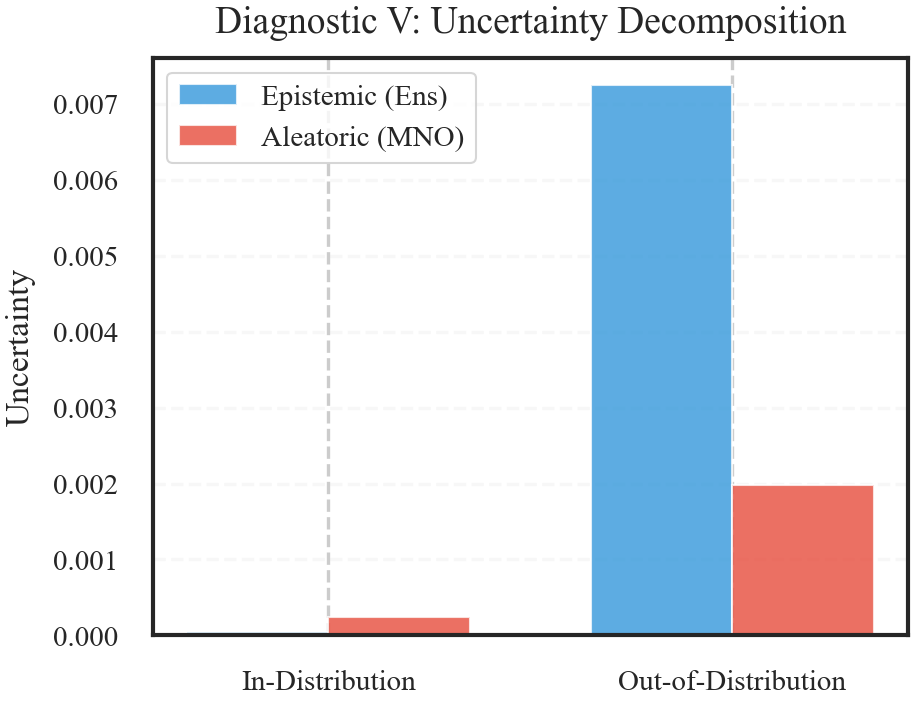}
	\caption{Diagnostic V, uncertainty decomposition. The plotted bars compare an under-trained ensemble proxy for in-distribution/out-of-distribution epistemic variance with MNO's predicted in-distribution aleatoric variance. The true aleatoric scalar is \(2.36\times10^{-4}\), while MNO predicts \(2.47\times10^{-4}\). The ensemble proxy is qualitative.}
	\label{fig:uncertainty}
\end{figure}
\nopagebreak

\textbf{Uncertainty decomposition.} MNO predicts an in-distribution aleatoric variance of \(2.47 \times 10^{-4}\), compared to the ground-truth aleatoric variance of \(2.36 \times 10^{-4}\). The epistemic proxy comes from a deliberately under-trained ensemble and should be read qualitatively.

\section{Doob-Meyer Decomposition}
\label{app:doob-meyer}

We invoke the classical theorem. The precise statement used by the paper is the canonical decomposition for square-integrable special semimartingales under the usual filtration hypotheses \citep{Doob_1937,Meyer_1962,Meyer_1963,Protter_2005,Beiglbock_Schachermayer_etal_2012}. In that regime,
\[
	u_t=u_0+A_t+M_t,
\]
where \(A\) is predictable finite variation and \(M\) is a local martingale. For an It\^o SPDE,
\[
	du_t=\mu(u_t,t)\,dt+\sigma(u_t,t)\,dW_t,
\]
the split is explicit as
\[
	A_t=\int_0^t\mu(u_s,s)\,ds,\qquad
	M_t=\int_0^t\sigma(u_s,s)\,dW_s.
\]
The stochastic integral is a martingale under the standard square-integrability conditions, and the drift integral is finite variation. Hilbert-space SPDE versions follow the Da Prato-Zabczyk framework when the noise and coefficients satisfy the corresponding well-posedness and Hilbert-Schmidt assumptions \citep{DaPrato_Zabczyk_2014}.

MNO learns the terminal map \(u_0\mapsto(m_\theta(u_0,t),B_\phi(u_0,t))\), which has the same mean/residual form in semimartingale cases. Infinitesimal coefficient recovery and data-driven pathwise certification lie outside the model's scope. With the Gaussian residual used in this paper, the same map becomes a Gaussian terminal-marginal approximation beyond semimartingales.

\section{FNO Drift Head and Approximation Scope}
\label{app:separable-kernel}

\subsection{Implemented Drift Head}
The MNO drift head is an FNO map
\[
	\mathcal{A}_\theta(u_0,t) = \operatorname{FNO}_\theta([u_0,t]),
\]
where \(t\) is concatenated as a constant channel over the spatial grid. The output is multiplied by the temporal gate
\[
	g(t)=1-\exp(-|\alpha|t),
\]
so \(A_\theta(u_0,0)=0\). In configurations with \texttt{shared\_backbone=True}, drift and residual heads share the FNO feature extractor and use separate projections. In configurations with \texttt{shared\_backbone=False}, they use independent FNO backbones.

\subsection{Approximation Scope}
The paper relies on the standard approximation and discretization behavior of Fourier neural operators for the drift component. The resulting claim is narrower and cleaner.
\begin{itemize}
	\item the drift head is a deterministic neural operator conditioned on \((u_0,t)\).
	\item the gate enforces exact initialization at \(t=0\).
	\item the architecture can be evaluated on different grids in the same manner as an FNO.
	\item approximation quality is empirical in this paper, with no explicit Barron-rate proof for the implemented weights.
\end{itemize}

This scope is sufficient for the experimental claims made here, because the mean-field results measure whether the learned drift is accurate, while the stochastic metrics measure whether the residual factor improves terminal marginal predictions.

\section{Covariance Operator Properties}
\label{app:covariance}

We model the stochastic residual covariance through a learned low-rank factor, with explicit Mercer eigenpairs appearing only in the auxiliary prior of Appendix~\ref{app:auxiliary-priors}.

\subsection{Low-Rank Factor Parameterization}
For a \(C\)-channel field discretized on \(N\) spatial points, the martingale head outputs
\[
	B_\phi(u_0,t) \in \mathbb{R}^{r \times C \times N}.
\]
Flattening the last two axes gives \(B_\phi \in \mathbb{R}^{r \times CN}\), and samples are drawn by
\[
	M_\phi(u_0,t)=B_\phi(u_0,t)^\top \xi,\qquad \xi\sim\mathcal{N}(0,I_r).
\]
Thus
\[
	\Gamma_\phi(u_0,t)=\operatorname{Cov}[M_\phi\mid u_0]=B_\phi(u_0,t)^\top B_\phi(u_0,t).
\]
For 2D fields the same construction uses \(B_\phi \in \mathbb{R}^{r \times C \times N_x \times N_y}\) and flattens over \(C N_x N_y\).

\subsection{Structural Guarantees}
The factor parameterization gives three guarantees used by the experiments.
\begin{enumerate}
	\item \textbf{Positive semidefiniteness.} For any vector \(v\), \(v^\top \Gamma_\phi v = \|B_\phi v\|_2^2 \geq 0\), so no eigenvalue clipping, softplus eigenvalue head, QR step, or Gram-Schmidt orthonormalization is required.
	\item \textbf{Rank control.} \(\operatorname{rank}(\Gamma_\phi) \leq r\), so the residual covariance is deliberately low rank.
	\item \textbf{Channel-aware variance.} The diagonal variance used by the NLL and consistency losses is
		\[
			\operatorname{diag}\Gamma_\phi(c,x)=\sum_{k=1}^r B_k(c,x)^2,
		\]
		with separate variance fields for each physical channel.
\end{enumerate}

\begin{theorem}[PSD Low-Rank Covariance]\label{thm:cov-approx}
	Let \(B:H\times[0,T]\to\mathbb{R}^{r\times n}\) be any learned factor on an \(n\)-dimensional discretization of the field. Then \(\Gamma=B^\top B\) is symmetric positive semidefinite and has rank at most \(r\). Conversely, every rank-\(r\) positive semidefinite covariance matrix admits such a factorization.
\end{theorem}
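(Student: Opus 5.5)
The argument is pure finite-dimensional linear algebra, applied pointwise in \((u_0,t)\). Fix any \((u_0,t)\in H\times[0,T]\) and write \(B=B(u_0,t)\in\mathbb{R}^{r\times n}\). Nothing depends on how \(B\) was produced, whether by the FNO head, the temporal gate, or training. This is why the statement is phrased for an arbitrary map into \(\mathbb{R}^{r\times n}\).

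For the forward direction there are three checks.
\begin{itemize}
\item \emph{Symmetry:} \((B^\top B)^\top=B^\top B\).
\item \emph{Positive semidefiniteness:} for any \(v\in\mathbb{R}^n\), \(v^\top B^\top B v=\|Bv\|_2^2\ge 0\), as already recorded in the structural guarantees above.
\item \emph{Rank:} \(\operatorname{rank}(B^\top B)\le\operatorname{rank}(B)\le\min(r,n)\le r\), because the rank of a product is at most the rank of either factor.
\end{itemize}
Equivalently, the column space of \(B^\top B\) lies in the row space of \(B\), which has dimension at most \(r\). I would also note that \(\ker(B^\top B)=\ker B\), since \(B^\top Bv=0\) implies \(\|Bv\|^2=0\). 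This gives \(\operatorname{rank}(B^\top B)=\operatorname{rank}(B)\) exactly. It is not required, but it is a clean remark.

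For the converse, let \(\Gamma\in\mathbb{R}^{n\times n}\) be symmetric PSD with \(\operatorname{rank}\Gamma=s\le r\). I would read ``rank-\(r\)'' as ``rank at most \(r\)'', which covers both cases at once.
\begin{enumerate}
\item By the spectral theorem, write \(\Gamma=\sum_{k=1}^s\lambda_k e_ke_k^\top\) with orthonormal \(e_k\) and \(\lambda_k>0\).
\item Define \(B\in\mathbb{R}^{r\times n}\) with rows \(\sqrt{\lambda_k}\,e_k^\top\) for \(k\le s\), and zero rows for \(s<k\le r\).
\item Then \(B^\top B=\sum_k\lambda_k e_ke_k^\top=\Gamma\).
\end{enumerate}
A pivoted Cholesky factor or \(\Gamma^{1/2}\) truncated to its nonzero rows would serve equally well.

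The factor is not unique. For any orthogonal \(Q\in O(r)\), \((QB)^\top(QB)=\Gamma\). This is the rotational ambiguity that the head-separation diagnostic refers to.

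There is no real obstacle; the only delicate point is the wording of the converse. The theorem says ``admits such a factorization'', but the factor \(B\) is produced per matrix. It is not claimed that a continuous-in-\((u_0,t)\) or network-realizable map \(B\) exists for a given family \(\Gamma(u_0,t)\). I would state that the converse is pointwise and algebraic. Realizing it with the FNO head is an approximation question of the kind covered informally by Theorem~\ref{thm:cov-approx-informal}, and it lies outside this statement.
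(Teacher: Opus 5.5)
Your proof is correct and follows the paper's argument essentially step for step. You use symmetry, then $v^\top B^\top Bv=\|Bv\|_2^2\ge 0$, then $\operatorname{rank}(B^\top B)\le\operatorname{rank}(B)\le r$, and a spectral factorization for the converse. Your explicit construction from the $s$ nonzero eigenpairs plus $r-s$ zero rows is tidier than the paper's ``$B=\Lambda^{1/2}Q^\top$ padded with zero rows,'' since that matrix has $n$ rows and needs its zero rows deleted, not padded, when $n>r$.

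One side remark is wrong, though it is not needed for the proof. The symmetric root $\Gamma^{1/2}=Q\Lambda^{1/2}Q^\top$ generally has all $n$ rows nonzero even when $\operatorname{rank}\Gamma<n$. So ``truncated to its nonzero rows'' does not yield an $r\times n$ factor; use $\Lambda^{1/2}Q^\top$ with its zero rows removed, or the pivoted Cholesky factor.
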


\begin{proof}
	Symmetry follows from \((B^\top B)^\top=B^\top B\). For positive semidefiniteness, \(v^\top B^\top Bv=\|Bv\|_2^2\geq0\). The rank bound follows from \(\operatorname{rank}(B^\top B)\leq\operatorname{rank}(B)\leq r\). Conversely, if \(\Gamma\succeq0\) and \(\operatorname{rank}(\Gamma)\leq r\), its spectral decomposition \(\Gamma=Q\Lambda Q^\top\) gives \(B=\Lambda^{1/2}Q^\top\), padded with zero rows if needed.
\end{proof}

The theorem is finite-dimensional by design because it describes the implemented architecture. In the continuum, the same idea corresponds to learning \(r\) square-integrable factor fields and forming their outer products. True Mercer eigenfunctions lie outside the claim, and the factor is identifiable only up to orthogonal rotations of its rows. The experiments report mean, variance, sampled \(W_2\), and diagnostic PSD behavior, leaving eigenfunction recovery aside.

\section{Auxiliary Separable and Mercer Priors}
\label{app:auxiliary-priors}

Beyond the generic FNO drift head, the MNO architecture can be generalized into a more explicit functional prior with a separable drift kernel and a Mercer-style covariance head whose eigenvalues and orthonormal eigenfunctions are represented directly. We derive the formulation here as a compatible extension of the same drift/residual idea, useful when one wants identifiable basis functions or a stronger hand-coded operator prior.

\subsection{Separable Drift Prior}

Let \(H=L^2(D)\). A separable drift head replaces the FNO drift map by
\[
	[\mathcal{A}_\eta(u_0,t)](x)
	= \sum_{i=1}^{N_K}
	\alpha_i(u_0,t)\,\varphi_i(x)
	\langle \psi_i,u_0\rangle_{L^2(D)},
\]
where \(\psi_i\) are sensing functions, \(\varphi_i\) are actuation functions, and \(\alpha_i\) are learned coefficients. If \(\alpha_i\) depend on an encoder of \(u_0\), the head is a nonlinear finite-rank prior. If \(\alpha_i=\alpha_i(t)\), it is a classical separable approximation of a time-dependent linear integral operator.

\begin{proposition}[Separable Drift Approximation]\label{prop:aux-separable}
	Let \(K_t:H\to H\) be a compact Hilbert-Schmidt operator with singular system \(\{(\sigma_i(t),\varphi_i^\star,\psi_i^\star)\}_{i\geq1}\). Its rank-\(N_K\) separable truncation
	\[
		[K_{t,N_K}u](x)=
		\sum_{i=1}^{N_K}\sigma_i(t)\varphi_i^\star(x)
		\langle \psi_i^\star,u\rangle_{L^2(D)}
	\]
	satisfies
	\[
		\|K_t-K_{t,N_K}\|_{\mathrm{HS}}^2
		= \sum_{i>N_K}\sigma_i(t)^2,
		\qquad
		\|K_t-K_{t,N_K}\|_{\mathrm{op}}
		= \sigma_{N_K+1}(t).
	\]
	Consequently, compact linear drift components with small singular tails admit accurate separable drift heads, and neural approximation of \(\varphi_i^\star,\psi_i^\star\), and \(\sigma_i(t)\) adds the usual function-approximation error.
\end{proposition}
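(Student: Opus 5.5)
The plan is the standard Eckart--Young--Mirsky argument, transported to Hilbert--Schmidt operators on \(H=L^2(D)\). Since \(K_t\) is compact, it has a singular value decomposition
\[
	K_t u=\sum_{i\geq1}\sigma_i(t)\,\varphi_i^\star\,\langle\psi_i^\star,u\rangle_{L^2(D)},
\]
which converges in operator norm. Here \(\{\varphi_i^\star\}\) and \(\{\psi_i^\star\}\) are orthonormal systems, and the singular values are nonincreasing, \(\sigma_1(t)\geq\sigma_2(t)\geq\cdots\geq0\). Because \(K_t\) is Hilbert--Schmidt, \(\sum_i\sigma_i(t)^2=\|K_t\|_{\mathrm{HS}}^2<\infty\). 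Subtracting the truncation \(K_{t,N_K}\) leaves the tail operator
\[
	R_{N_K}:=K_t-K_{t,N_K}=\sum_{i>N_K}\sigma_i(t)\,\varphi_i^\star\otimes\psi_i^\star .
\]
Both identities will be read off directly from the singular system of \(R_{N_K}\).

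\emph{Hilbert--Schmidt identity.} Complete \(\{\psi_i^\star\}\) to an orthonormal basis \(\{e_j\}\) of \(H\). The extra basis vectors lie in \(\ker K_t\), hence also in \(\ker R_{N_K}\). Then \(R_{N_K}\psi_i^\star=\sigma_i(t)\varphi_i^\star\) for \(i>N_K\), and \(R_{N_K}e_j=0\) for every other basis vector. Therefore
\[
	\|R_{N_K}\|_{\mathrm{HS}}^2=\sum_j\|R_{N_K}e_j\|^2=\sum_{i>N_K}\sigma_i(t)^2,
\]
using orthonormality of the \(\varphi_i^\star\).

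\emph{Operator-norm identity.} For \(u\in H\), orthonormality of the \(\varphi_i^\star\) and Bessel's inequality give
\[
	\|R_{N_K}u\|^2=\sum_{i>N_K}\sigma_i(t)^2\,|\langle\psi_i^\star,u\rangle|^2\leq\sigma_{N_K+1}(t)^2\,\|u\|^2 .
\]
Equality is attained at \(u=\psi_{N_K+1}^\star\), so \(\|R_{N_K}\|_{\mathrm{op}}=\sigma_{N_K+1}(t)\). If \(K_t\) has finite rank at most \(N_K\), we use the convention \(\sigma_{N_K+1}(t)=0\) and both sides vanish.

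\emph{Consequence.} The statement is that replacing \((\sigma_i,\varphi_i^\star,\psi_i^\star)\) by learned approximations \((\alpha_i,\varphi_i,\psi_i)\) adds only a controlled error. I would split
\[
	\mathcal{A}_\eta-K_t=(\mathcal{A}_\eta-K_{t,N_K})+(K_{t,N_K}-K_t),
\]
so the second term is exactly the tail bounded above. For the first term, a telescoping estimate on each rank-one summand gives
\[
	\|\alpha_i\varphi_i\otimes\psi_i-\sigma_i\varphi_i^\star\otimes\psi_i^\star\|_{\mathrm{HS}}
	\leq|\alpha_i-\sigma_i|\,\|\varphi_i\|\|\psi_i\|+\sigma_i\|\varphi_i-\varphi_i^\star\|\|\psi_i\|+\sigma_i\|\psi_i-\psi_i^\star\| .
\]
This holds because \(\|a\otimes b\|_{\mathrm{HS}}=\|a\|\,\|b\|\). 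Summing over \(i\leq N_K\) and applying the triangle inequality yields the "usual function-approximation error", which goes to zero whenever the neural approximants converge in \(L^2\) and in the coefficients. This step is routine; no single step is a real obstacle.

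The only delicate point is that singular vectors are not unique when singular values repeat. I would note that the identities hold for any choice of singular system, so the fixed system in the statement suffices.
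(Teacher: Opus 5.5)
Your proposal is correct and follows the same route as the paper. Both arguments use the Schmidt (singular value) decomposition of \(K_t\), orthonormality of the singular functions for the Hilbert--Schmidt tail identity, the first omitted singular value for the operator-norm identity, and a perturbation of the finite sum for the neural-approximation consequence. You fill in what the paper's sketch omits: the basis completion, the Bessel bound with attainment at \(\psi_{N_K+1}^\star\), and the explicit three-term rank-one estimate. The argument only uses the direct tail computation, not the optimality part of Eckart--Young--Mirsky, so that label is a slight misnomer.
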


\begin{proof}
	The proof is the Schmidt decomposition of a compact Hilbert-Schmidt operator. Orthogonality of the singular functions gives the Hilbert-Schmidt tail identity, and the operator-norm error of the truncated singular expansion is the first omitted singular value. Replacing the exact singular functions and coefficients by neural approximations perturbs the finite sum continuously in \(L^2(D)\).
\end{proof}

\subsection{Mercer Covariance Head Prior}

The analogous covariance prior writes the residual covariance as an explicit Mercer expansion
\[
	\Gamma_\zeta(u_0,t)
	= \sum_{k=1}^r
	\lambda_k(u_0,t)\,
	\psi_k(\cdot;u_0,t)\otimes\psi_k(\cdot;u_0,t),
\]
with \(\lambda_k(u_0,t)\geq0\) and \(\langle\psi_i,\psi_j\rangle_{L^2(D)}=\delta_{ij}\). A bounded positive eigenvalue head can be written, for example, as
\[
	\lambda_k(u_0,t)
	= C_\lambda\tanh\!\left(\frac{\exp(\widehat{\lambda}_k(u_0,t))}{C_\lambda}\right),
\]
and orthonormality can be imposed by applying a differentiable Gram-Schmidt step to raw fields \(\widetilde{\psi}_k\):
\[
	\psi_k
	=
	\frac{\widetilde{\psi}_k-\sum_{j<k}\langle \widetilde{\psi}_k,\psi_j\rangle\psi_j}
	{\left\|\widetilde{\psi}_k-\sum_{j<k}\langle \widetilde{\psi}_k,\psi_j\rangle\psi_j\right\|_{L^2(D)}}.
\]

\begin{proposition}[Mercer Covariance Approximation]\label{prop:aux-mercer}
	Let \(\Gamma^\star(u,t)\) be a positive semidefinite trace-class covariance operator on \(H\), with eigenpairs \(\{(\lambda_k^\star(u,t),\psi_k^\star(u,t))\}_{k\geq1}\). For fixed \((u,t)\), the rank-\(r\) Mercer truncation has Hilbert-Schmidt error
	\[
		\left\|\Gamma^\star(u,t)-\sum_{k=1}^r
		\lambda_k^\star(u,t)\psi_k^\star\otimes\psi_k^\star
		\right\|_{\mathrm{HS}}^2
		= \sum_{k>r}\lambda_k^\star(u,t)^2.
	\]
	If the learned head approximates the first \(r\) eigenvalues and normalized eigenfunctions with errors \(|\lambda_k-\lambda_k^\star|\leq\delta_\lambda\) and \(\|\psi_k-\psi_k^\star\|_{L^2}\leq\delta_\psi\), then
	\[
		\|\Gamma^\star-\Gamma_\zeta\|_{\mathrm{HS}}
		\leq
		\left(\sum_{k>r}\lambda_k^{\star 2}\right)^{1/2}
		+ r\delta_\lambda
		+ (2\delta_\psi+\delta_\psi^2)\sum_{k=1}^r\lambda_k^\star
		+ r\delta_\lambda(2\delta_\psi+\delta_\psi^2).
	\]
\end{proposition}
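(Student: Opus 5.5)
The plan is a triangle-inequality argument: first isolate the spectral tail, then split the error on the first \(r\) eigenpairs into an eigenvalue part, an eigenfunction part and a cross term. Throughout, fix \((u,t)\) and suppress it. Write \(P_k^\star=\psi_k^\star\otimes\psi_k^\star\) and \(P_k=\psi_k\otimes\psi_k\).

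\emph{The tail identity.} Since \(\Gamma^\star\) is trace class, it is Hilbert-Schmidt, and the spectral theorem gives \(\Gamma^\star=\sum_{k\ge1}\lambda_k^\star P_k^\star\) with \(\{\psi_k^\star\}\) orthonormal. The operators \(P_k^\star\) are mutually orthogonal in the Hilbert-Schmidt inner product, since \(\langle P_j^\star,P_k^\star\rangle_{\mathrm{HS}}=\langle\psi_j^\star,\psi_k^\star\rangle^2=\delta_{jk}\). Hence
\[
\Bigl\|\sum_{k>r}\lambda_k^\star P_k^\star\Bigr\|_{\mathrm{HS}}^2=\sum_{k>r}\lambda_k^{\star2},
\]
which is the first claim. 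This is the same computation as in the proof of Proposition~\ref{prop:aux-separable}.

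\emph{Splitting the error.} For the perturbation bound I would write
\[
\Gamma^\star-\Gamma_\zeta=\sum_{k>r}\lambda_k^\star P_k^\star+\sum_{k=1}^r\bigl(\lambda_k^\star P_k^\star-\lambda_kP_k\bigr)
\]
and apply the triangle inequality, first to the two pieces and then termwise in \(k\). For each \(k\le r\) I use the exact three-term identity
\[
\lambda_kP_k-\lambda_k^\star P_k^\star=(\lambda_k-\lambda_k^\star)P_k^\star+\lambda_k^\star(P_k-P_k^\star)+(\lambda_k-\lambda_k^\star)(P_k-P_k^\star).
\]
Two ingredients control the three terms.

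First, a rank-one operator satisfies \(\|a\otimes b\|_{\mathrm{HS}}=\|a\|\,\|b\|\), so \(\|P_k^\star\|_{\mathrm{HS}}=1\).

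Second, the projector perturbation is bounded by expanding with \(e=\psi_k-\psi_k^\star\):
\[
P_k-P_k^\star=e\otimes\psi_k^\star+\psi_k^\star\otimes e+e\otimes e.
\]
This gives \(\|P_k-P_k^\star\|_{\mathrm{HS}}\le 2\delta_\psi+\delta_\psi^2\).

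Putting these together, term \(k\) has Hilbert-Schmidt norm at most
\[
\delta_\lambda+\lambda_k^\star(2\delta_\psi+\delta_\psi^2)+\delta_\lambda(2\delta_\psi+\delta_\psi^2).
\]
Summing over \(k=1,\dots,r\) (using \(\lambda_k^\star\ge0\)) yields exactly the three remaining terms of the stated bound.

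\emph{Main obstacle.} The algebra is routine; the delicate point is that the hypothesis \(\|\psi_k-\psi_k^\star\|\le\delta_\psi\) has to be meaningful. Eigenfunctions are defined only up to sign, and only up to rotation within an eigenspace when eigenvalues repeat. I would handle this by stating that \(\{\psi_k^\star\}\) denotes \emph{some} orthonormal eigenbasis, with signs and rotations within degenerate eigenspaces chosen to match the learned fields. Any such choice leaves both \(\Gamma^\star\) and its rank-\(r\) truncation unchanged, provided the truncation does not split an eigenspace, i.e.\ \(\lambda_r^\star>\lambda_{r+1}^\star\). If the truncation does split an eigenspace, the truncation is not unique, but the tail identity still holds for whichever basis is chosen. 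With this convention, no argument beyond the triangle inequality is needed.
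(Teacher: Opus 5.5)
Your proof is correct and follows the paper's argument: Hilbert--Schmidt orthogonality of the rank-one projectors \(\psi_k^\star\otimes\psi_k^\star\) gives the tail identity, \(\|f\otimes g\|_{\mathrm{HS}}=\|f\|\,\|g\|\) gives the perturbation bound \(\|\psi_k\otimes\psi_k-\psi_k^\star\otimes\psi_k^\star\|_{\mathrm{HS}}\le 2\delta_\psi+\delta_\psi^2\), and a termwise triangle inequality finishes the estimate. Where the paper only says ``summing the perturbations,'' your explicit three-term split of \(\lambda_kP_k-\lambda_k^\star P_k^\star\) shows where the cross term \(r\delta_\lambda(2\delta_\psi+\delta_\psi^2)\) comes from, and your convention fixing the sign and rotation of \(\psi_k^\star\) is a useful clarification the paper omits.
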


\begin{proof}
	The spectral theorem gives the Mercer expansion and the Hilbert-Schmidt tail identity. For the learned finite-rank part,
	\[
		\|\psi_k^\star\otimes\psi_k^\star-\psi_k\otimes\psi_k\|_{\mathrm{HS}}
		\leq 2\|\psi_k-\psi_k^\star\|_{L^2}+\|\psi_k-\psi_k^\star\|_{L^2}^2,
	\]
	using \(\|f\otimes g\|_{\mathrm{HS}}=\|f\|_{L^2}\|g\|_{L^2}\) and unit normalization. Summing the eigenvalue and eigenfunction perturbations and adding the spectral tail gives the bound.
\end{proof}

\subsection{Relation to the Implemented Factor Head}

The implemented covariance head is a relaxation of the Mercer prior. If a Mercer head is available, defining \(B_k(u_0,t)=\sqrt{\lambda_k(u_0,t)}\,\psi_k(\cdot;u_0,t)\) gives
\[
	B_\phi(u_0,t)^\top B_\phi(u_0,t)
	=
	\sum_{k=1}^r\lambda_k(u_0,t)\psi_k\otimes\psi_k.
\]
Conversely, a general learned factor \(B_\phi\) defines the same PSD rank-\(r\) covariance while leaving orthonormal eigenfunctions unidentified, since orthogonal rotations of the factor rows leave \(B_\phi^\top B_\phi\) unchanged. The main results use the factor head, and the separable/Mercer version above remains an optional structured prior.

\section{Training Details}
\label{app:training}

\textbf{Loss function.} The training objective used by the implementation is
\[
	\mathcal{L}
	= s_{\mathrm{NLL}}\mathcal{L}_{\mathrm{NLL}}
	+ \gamma\mathcal{L}_{\mathrm{consistency}}
	+ \epsilon\mathcal{L}_{\mathrm{martingale}}
	+ \delta\mathcal{L}_{\mathrm{reg}},
\]
where
\begin{itemize}
	\item \(\mathcal{L}_{\mathrm{NLL}}\) is the Gaussian negative log-likelihood of ensemble residuals using the predicted per-channel variance \(\sum_k B_k^2\), clamped to \([10^{-5},10^2]\).
	\item \(\mathcal{L}_{\mathrm{consistency}}\) is the relative \(L^2\) error between predicted analytical variance and empirical ensemble variance when multiple ensemble members are available.
	\item \(\mathcal{L}_{\mathrm{martingale}}\) is the squared ensemble mean of \(u_T-u_0-A_\theta(u_0,t)\), encouraging the residual to be centered.
	\item \(\mathcal{L}_{\mathrm{reg}}\) is a mild \(L^2\) regularization on the learned factor \(B_\phi\).
\end{itemize}
The default weights are \(s_{\mathrm{NLL}}=1.0\), \(\gamma=0.1\), \(\epsilon=0.1\), and \(\delta=0.01\). During early warmup, the NLL scale is ramped up so the mean head stabilizes before full variance learning.

\begin{table}[H]
	\centering
	\caption{Hyperparameters for MNO Case Studies (Standard Mode)}
	\label{tab:hyperparams}
	\begin{tabular}{lc}
		\toprule
		Parameter & Value \\
		\midrule
		Shared Backbone Width & 48 \\
		Shared Fourier Modes & 16 \\
		Backbone Layers & 4 \\
		Drift/Cov Head Depth & projection head / independent FNO \\
		Residual Factor Rank (\(r\)) & 16 \\
		Noise Type & Gaussian \\
		Optimizer & AdamW \\
		Learning Rate & \(10^{-3}\) \\
		Batch Size & 256 (1D), 32--64 (2D) \\
		Training Epochs & 120 \\
		Warmup Epochs (Mean-only) & 10 \\
		\bottomrule
	\end{tabular}
\end{table}

\textbf{Reproducibility details.} All reported result bundles, including the standard run, full run, and appendix diagnostic re-runs, were produced on a single AWS EC2 \texttt{g7e.2xl} instance using an NVIDIA RTX PRO 6000 Blackwell Server Edition GPU (sm\_120, CUDA 12.8, PyTorch 2.9.1).

\subsection{Loss Function Analysis: NLL and Moment Consistency}
MNO trains with NLL plus moment consistency, avoiding a direct Hilbert-Schmidt covariance loss.
\begin{itemize}
	\item \textbf{NLL term.} Ensemble members are treated as samples from the predicted Gaussian residual.
		\begin{equation}
			\mathcal{L}_{NLL}
			= \frac{1}{2}\left(\log v_\phi + \frac{(u_T-m_\theta)^2}{v_\phi}\right),
			\qquad v_\phi(c,x)=\sum_k B_k(c,x)^2,
		\end{equation}
		averaged over batch, ensemble, channels, and space.
	\item \textbf{Consistency term.} When \(N_{\mathrm{ens}}>1\), the predicted variance is compared to the empirical ensemble variance. This anchors the residual factor to observed marginal moments without claiming full covariance recovery.
	\item \textbf{Martingale centering.} The residual mean penalty enforces the terminal analogue of \(\mathbb{E}[M_t\mid u_0]=0\).
\end{itemize}

\section{Additional Experimental Results}
\label{app:additional-results}

This section reports benchmark results by task family. Each table lists the methods present in the corresponding result file, and ``--'' marks metrics absent from that run.

\subsection{Full 1D SPDE Results}
\label{app:1d-full}

Table~\ref{tab:1d-spde-full} reports the 1D stochastic Burgers and \(\phi^4\) terminal-law runs. The SPDEBench wrapper uses the same \(\phi^4\) data and gives the same values as the raw \(\phi^4\) run.

\begin{table}[H]
	\centering
	\caption{1D stochastic terminal-law benchmarks. Lower is better for all metrics.}
	\label{tab:1d-spde-full}
	\small
	\resizebox{\linewidth}{!}{%
		\begin{tabular}{llccc}
			\toprule
			Benchmark & Method & Mean RMSE & Var.\ RMSE & \(W_2\) \\
			\midrule
			Stochastic Burgers & Wiener-Chaos & 0.7746 & 0.0400 & 0.7271 \\
			Stochastic Burgers & SDENO-style & 0.7748 & 0.0409 & 0.7203 \\
			Stochastic Burgers & F-SPDENO-style & 0.7791 & 0.0372 & 0.7304 \\
			Stochastic Burgers & Neural SPDE & 0.6265 & 0.0378 & 0.6483 \\
			Stochastic Burgers & FNO & 0.0217 & -- & 0.0179 \\
			Stochastic Burgers & MNO & 0.0230 & 0.000438 & 0.00954 \\
			\midrule
			\(\phi^4\) / SPDEBench & Wiener-Chaos & 0.6420 & 0.0155 & 0.8111 \\
			\(\phi^4\) / SPDEBench & SDENO-style & 0.6422 & 0.0165 & 0.7990 \\
			\(\phi^4\) / SPDEBench & Neural SPDE & 0.6432 & 0.0359 & 0.6572 \\
			\(\phi^4\) / SPDEBench & MNO & 0.0086 & 0.000203 & 0.00547 \\
			\bottomrule
		\end{tabular}
	}
\end{table}

\subsection{1D Zero-Shot Superresolution}

Table~\ref{tab:superres-full} reports the full resolution sweep for the 1D superresolution run. MNO remains far below Neural SPDE at the zero-shot evaluation resolutions, while deterministic/stochastic-surrogate baselines are slightly lower on some mean and variance entries.

\begin{table}[H]
	\centering
	\caption{1D zero-shot superresolution. Models are trained at resolution 32 and evaluated at the listed resolution. Lower is better.}
	\label{tab:superres-full}
	\small
	\begin{tabular}{llcc}
		\toprule
		Resolution & Method & Mean RMSE & Var.\ RMSE \\
		\midrule
		32 & Wiener-Chaos & 0.0372 & 0.001322 \\
		32 & SDENO-style & 0.0385 & 0.001285 \\
		32 & Neural SPDE & 0.1023 & 0.002002 \\
		32 & MNO & 0.0390 & 0.001188 \\
		\midrule
		64 & Wiener-Chaos & 0.0188 & 0.001057 \\
		64 & SDENO-style & 0.0198 & 0.001061 \\
		64 & Neural SPDE & 0.1403 & 0.001322 \\
		64 & MNO & 0.0205 & 0.001261 \\
		\midrule
		128 & Wiener-Chaos & 0.0409 & 0.002675 \\
		128 & SDENO-style & 0.0397 & 0.002144 \\
		128 & Neural SPDE & 0.1957 & 0.001140 \\
		128 & MNO & 0.0407 & 0.002537 \\
		\bottomrule
	\end{tabular}
\end{table}

\subsection{Full Rough Volatility Results}
\label{app:rough-vol-full}

Table~\ref{tab:rough-vol-full} reports the full Hurst-parameter sweep averaged over five evaluation seeds. The metrics are terminal-marginal mean RMSE, variance RMSE, and empirical \(W_2\), with pathwise roughness preservation outside this evaluation.

\begin{table}[H]
	\centering
	\caption{Rough volatility terminal-marginal results across Hurst parameter \(H\). Lower is better for all metrics.}
	\label{tab:rough-vol-full}
	\small
	\resizebox{\linewidth}{!}{%
		\begin{tabular}{llccc}
			\toprule
			\(H\) & Method & Mean RMSE & Var.\ RMSE & \(W_2\) \\
			\midrule
			0.1 & Neural SDE & 0.0240 & 0.0990 & 0.0668 \\
			0.1 & Neural CDE & 0.0241 & 0.0994 & 0.0675 \\
			0.1 & MNO & 0.0230 & 0.0144 & 0.0257 \\
			\midrule
			0.2 & Neural SDE & 0.0239 & 0.0963 & 0.0666 \\
			0.2 & Neural CDE & 0.0239 & 0.0967 & 0.0674 \\
			0.2 & MNO & 0.0229 & 0.0137 & 0.0258 \\
			\midrule
			0.3 & Neural SDE & 0.0236 & 0.0944 & 0.0660 \\
			0.3 & Neural CDE & 0.0236 & 0.0949 & 0.0669 \\
			0.3 & MNO & 0.0226 & 0.0133 & 0.0254 \\
			\midrule
			0.4 & Neural SDE & 0.0234 & 0.0930 & 0.0655 \\
			0.4 & Neural CDE & 0.0234 & 0.0934 & 0.0665 \\
			0.4 & MNO & 0.0225 & 0.0129 & 0.0251 \\
			\midrule
			0.5 & Neural SDE & 0.0233 & 0.0920 & 0.0653 \\
			0.5 & Neural CDE & 0.0234 & 0.0924 & 0.0663 \\
			0.5 & MNO & 0.0223 & 0.0128 & 0.0249 \\
			\bottomrule
		\end{tabular}
	}
\end{table}

\subsection{Generative Efficiency}

Table~\ref{tab:generative-full} reports the NFE sweep for the conditional diffusion baseline. Both methods were trained under a 60-second wall-clock budget, and the reported timing run reached the 500-epoch MNO cap in 23.1 seconds and 21 diffusion epochs in 61.6 seconds because the diffusion target has higher per-epoch cost.

\begin{table}[H]
	\centering
	\caption{Generative-efficiency comparison on stochastic Burgers. Time is seconds per evaluation.}
	\label{tab:generative-full}
	\begin{tabular}{lccc}
		\toprule
		Method & NFE & Time (s) & \(W_2\) \\
		\midrule
		MNO & 1 & 0.00703 & 0.00813 \\
		Diffusion & 25 & 0.02134 & 0.15895 \\
		Diffusion & 50 & 0.04207 & 0.15698 \\
		Diffusion & 100 & 0.08386 & 0.16016 \\
		\bottomrule
	\end{tabular}
\end{table}

\subsection{Reaction-Diffusion Event Risk}

Table~\ref{tab:reaction-full} reports the event-risk run. The task uses a thresholded terminal event protocol in place of \(W_2\), where a realization is labeled if \(\max_x |u(T,x)| \geq 10\), and an initial condition is positive if at least 10\% of its ensemble realizations cross the threshold.

\begin{table}[H]
	\centering
	\caption{Reaction-diffusion event-risk results. Values are averaged over evaluation seeds.}
	\label{tab:reaction-full}
	\small
	\resizebox{\linewidth}{!}{%
		\begin{tabular}{lccccccc}
			\toprule
			Method & Mean RMSE & Var.-peak corr. & Risk score & Precision & Recall & F1 & Accuracy \\
			\midrule
			FNO & 0.1133 & 0.0000 & 0.7600 & 1.0000 & 0.9441 & 0.9712 & 0.9550 \\
			Neural SPDE & 3.4899 & -0.0237 & 0.6083 & 0.9835 & 0.8571 & 0.9159 & 0.8733 \\
			MNO (width 24) & 0.9948 & -0.3328 & 0.6453 & 1.0000 & 0.9193 & 0.9579 & 0.9350 \\
			MNO (width 48) & 1.0086 & -0.4307 & 0.6518 & 1.0000 & 0.8944 & 0.9443 & 0.9150 \\
			\bottomrule
		\end{tabular}
	}
\end{table}

\subsection{Full 2D Results}
\label{app:2d-full}

\begin{figure}[!htb]
	\centering
	\includegraphics[width=0.85\textwidth]{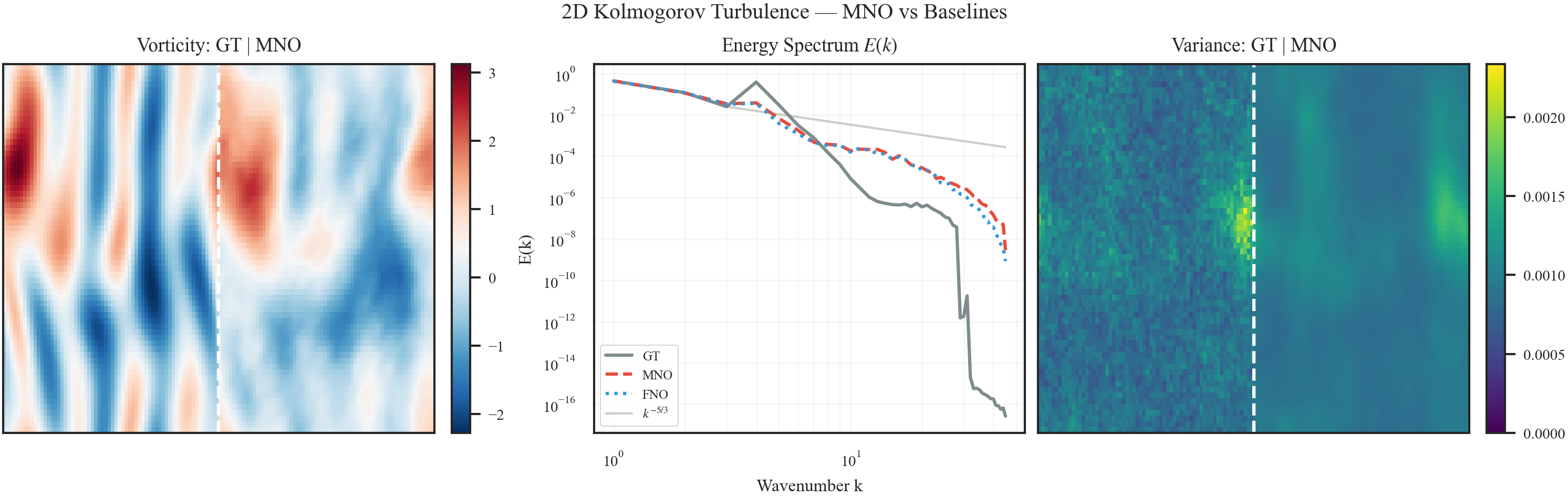}
	\caption{2D turbulent flow (Kolmogorov-forced, NS-style, \(64\times64\)). The figure shows the MNO mean prediction and its variance field, with numerical comparisons against all reported baselines given in Table~\ref{tab:2d-field-full}.}
	\label{fig:turbulent-flow-full}
\end{figure}

Table~\ref{tab:2d-field-full} reports all 2D field baselines. Only MNO emits a variance field in these runs.

\begin{table}[!htb]
	\centering
	\caption{2D field benchmarks. Lower is better. Variance RMSE is reported only for models that emit a variance field.}
	\label{tab:2d-field-full}
	\begin{tabular}{lcccc}
		\toprule
		Model & Gray-Scott mean & Gray-Scott var. & Turbulent mean & Turbulent var. \\
		\midrule
		FNO & 0.0054 & -- & 0.6583 & -- \\
		ResNet & 0.0198 & -- & 0.5806 & -- \\
		U-Net & 0.0236 & -- & 0.0706 & -- \\
		MNO & 0.0345 & 0.001441 & 0.6687 & 0.6896 \\
		\bottomrule
	\end{tabular}
\end{table}

\begin{figure}[!htb]
	\centering
	\includegraphics[width=0.85\textwidth]{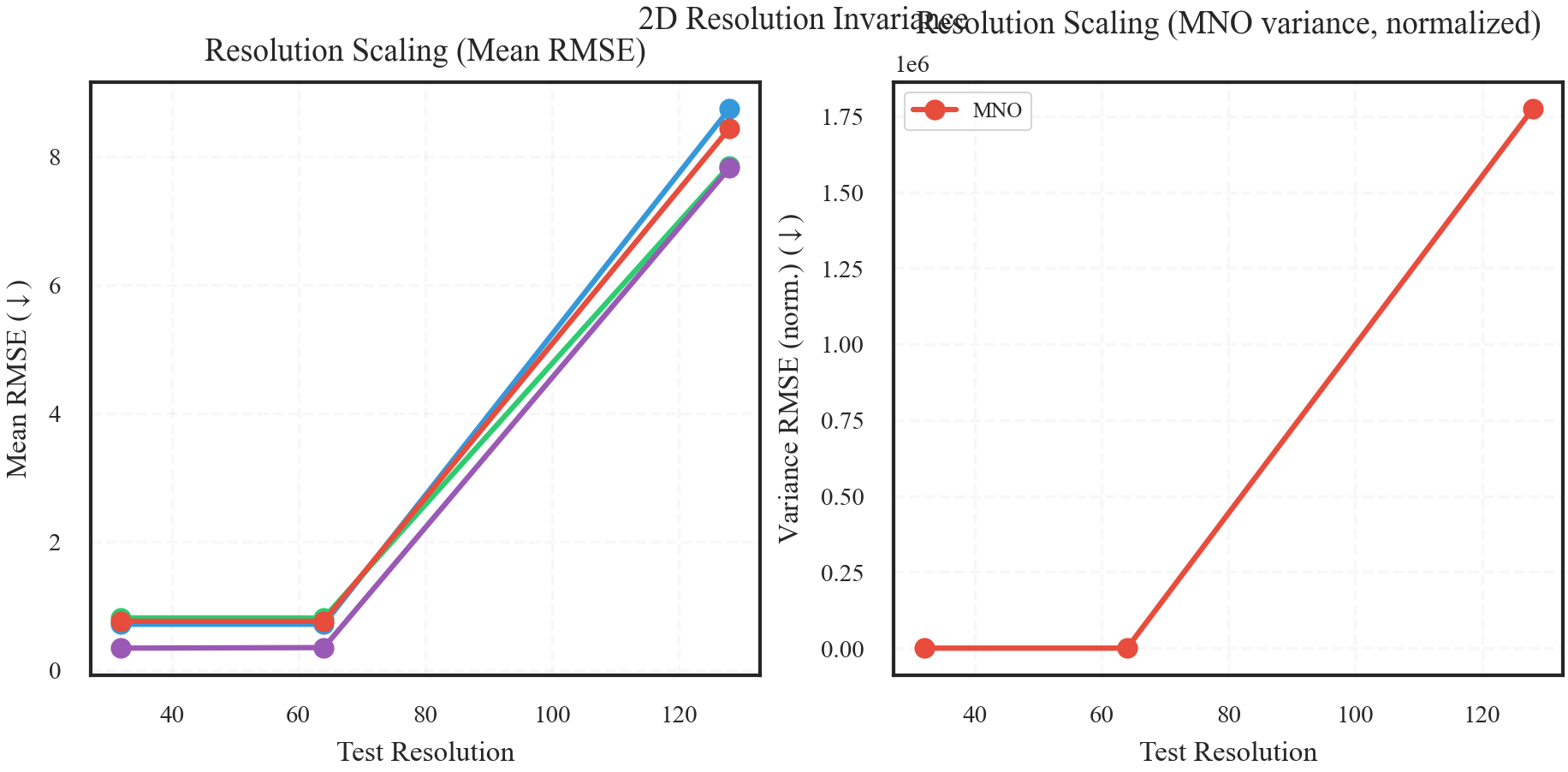}
	\caption{2D resolution transfer. Models are trained at \(64\times64\) and evaluated zero-shot at \(32\times32\), \(64\times64\), and \(128\times128\). All spectral and CNN baselines degrade sharply at \(128\times128\) because \texttt{modes=16} maps to different physical wavenumbers when the spatial grid changes, while MNO sits at parity with the FNO baseline at every tested resolution. Full mean-RMSE values are reported in Table~\ref{tab:2d-resolution-full}.}
	\label{fig:2d-resolution}
\end{figure}

\begin{table}[!htb]
	\centering
	\caption{2D resolution transfer. Mean RMSE is reported at each evaluation grid, and all models are trained at \(64\times64\). The result file also stores an MNO variance RMSE at \(64\times64\), while comparable variance metrics are absent for the other entries.}
	\label{tab:2d-resolution-full}
	\begin{tabular}{lcccc}
		\toprule
		Model & \(32\times32\) mean & \(64\times64\) mean & \(128\times128\) mean & \(64\times64\) var. \\
		\midrule
		FNO & 0.7125 & 0.7125 & 8.7421 & -- \\
		ResNet & 0.8110 & 0.8103 & 7.8570 & -- \\
		U-Net & 0.3408 & 0.3486 & 7.8214 & -- \\
		MNO & 0.7602 & 0.7602 & 8.4394 & 0.2510 \\
		\bottomrule
	\end{tabular}
\end{table}

The 2D Gray-Scott system exhibits sharp, spatially coherent Turing-pattern structures. In the reported run FNO has lower mean RMSE than MNO (Table~\ref{tab:2d-field-full}), because the training configuration of MNO adds a residual factor for stochastic terminal uncertainty while this regime is dominated by mean-field pattern prediction under the tested noise level.

\begin{figure}[!htb]
	\centering
	\includegraphics[width=0.85\textwidth]{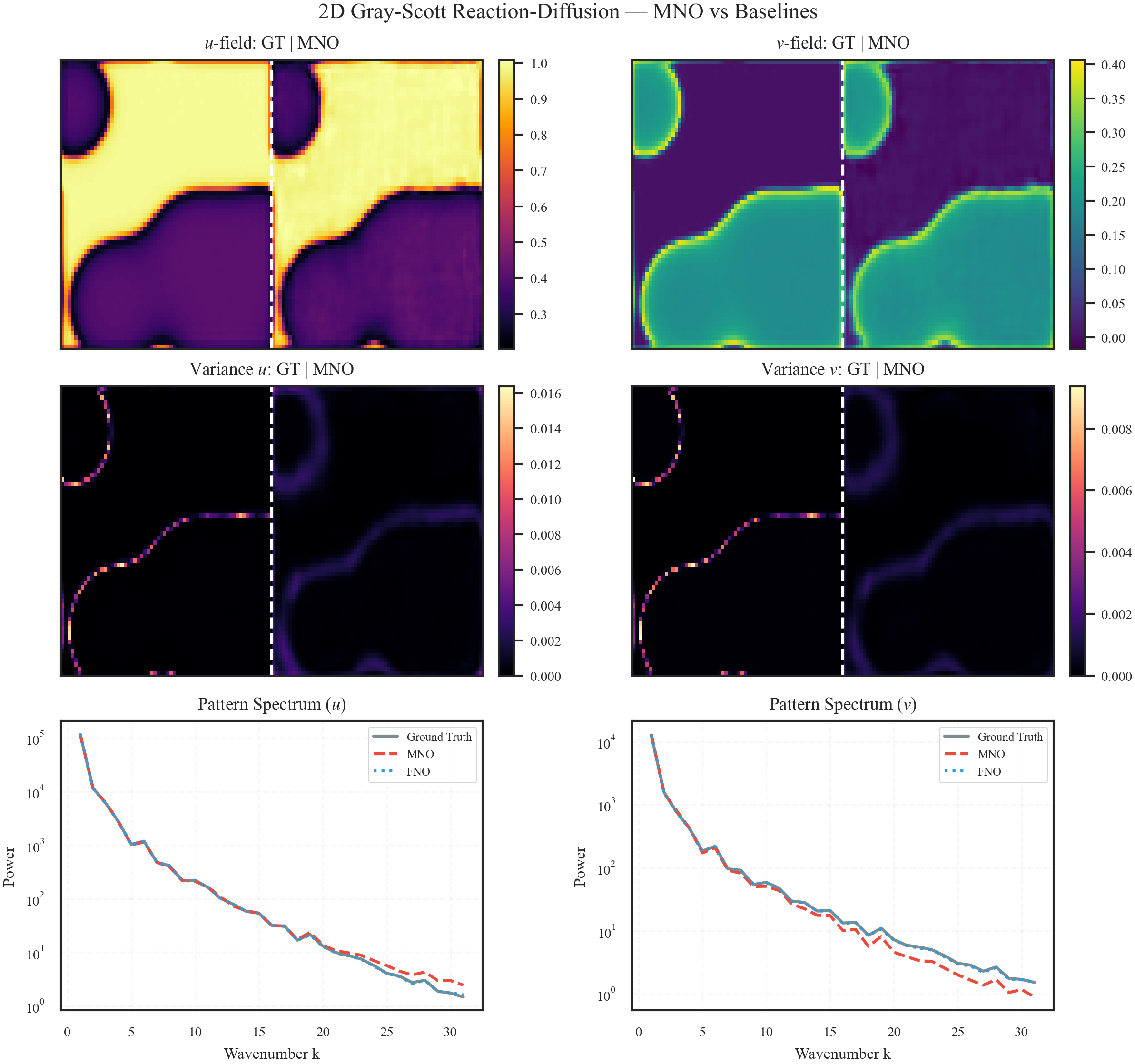}
	\caption{2D Gray-Scott reaction-diffusion. The reported run gives mean RMSE \(0.0054\) for FNO and \(0.0345\) for MNO, with all reported 2D field metrics listed in Table~\ref{tab:2d-field-full}.}
	\label{fig:gray-scott}
\end{figure}

\subsection{Burgers Ablations: Rank, Backbone, Loss, Factor Dimension}
\label{app:burgers-ablations}

We isolate four architectural and loss choices on the stochastic Burgers benchmark. All cells share \(n_{\mathrm{train}}=1500\), \(n_{\mathrm{ensemble}}=256\), \(n_{\mathrm{epochs}}=150\), and 5 evaluation seeds, with only the swept axis changing.

\paragraph{Rank.} Sweeping the residual rank \(r\) with shared backbone and the full loss, mean RMSE decreases monotonically across the grid:
\begin{table}[H]
	\centering
	\caption{Rank ablation on stochastic Burgers (5-seed mean RMSE, lower is better).}
	\label{tab:burgers-rank-ablation}
	\begin{tabular}{lccccccc}
		\toprule
		\(r\) & 1 & 2 & 4 & 8 & 16 & 32 & 64 \\
		\midrule
		mean RMSE & 0.0237 & 0.0230 & 0.0229 & 0.0224 & 0.0204 & 0.0193 & 0.0181 \\
		\bottomrule
	\end{tabular}
\end{table}
Returns are diminishing past \(r=16\), and the gap from \(r=16\) to \(r=64\) is roughly \(11\%\).

\paragraph{Backbone sharing.} A single shared FNO backbone with separate projection heads outperforms two independent backbones (mean RMSE \(0.0204\) shared vs.\ \(0.0280\) split), at half the parameter count. We attribute the gap to reduced gradient interference between drift and basis updates when the lower layers see a single optimization signal.

\paragraph{Loss leave-one-out.} Removing each auxiliary term from the full objective:
\begin{table}[H]
	\centering
	\caption{Loss leave-one-out (5-seed mean RMSE).}
	\label{tab:burgers-loss-ablation}
	\begin{tabular}{lccccc}
		\toprule
		Variant & Full & No consistency & No martingale & No reg & NLL only \\
		\midrule
		Mean RMSE & 0.0204 & 0.0259 & 0.0206 & 0.0206 & 0.0259 \\
		\bottomrule
	\end{tabular}
\end{table}
The variance-consistency term (\(\gamma\)) is the load-bearing auxiliary signal, since removing it alone produces the same degradation as removing every auxiliary (\emph{NLL only}). The martingale and regularization terms have negligible marginal effect at this benchmark scale, but we retain them because of the residual-centering diagnostic in Appendix~\ref{app:martingale}.

\paragraph{Factor dimension.} Sweeping the number of factor channels over the same grid as the rank sweep produces identical mean-RMSE values within numerical noise, because factor channels and rank are the same architectural axis at single-channel resolution and differ only in their cost profile. Sample time and peak GPU memory scale linearly with the channel count, while RMSE plateaus past \(r=16\) per the rank sweep.

\subsection{Cross-Covariance Fidelity}
\label{app:crosscov}

This experiment isolates a known limitation of the diagonal-only consistency term, since \(B_\phi\) is directly optimized to match the marginal variance \(\operatorname{diag}(B_\phi^\top B_\phi)\). The full covariance \(B_\phi^\top B_\phi\) is constrained only indirectly, through the ensemble negative log-likelihood and the rank.

We train MNO and a Neural-SPDE baseline on stochastic Burgers at \(n_x = 32\) with a large reference ensemble (\(n_{\mathrm{ensemble}} = 1024\), \(n_{\mathrm{train}} = 400\), 5 seeds) and report the relative Frobenius distance
\[
	\texttt{cov\_frobenius\_rel} \;=\; \frac{\|\widehat{C} - C_{\mathrm{emp}}\|_F}{\|C_{\mathrm{emp}}\|_F},
\]
where \(\widehat{C}\) is \(B_\phi^\top B_\phi\) for MNO and the sample covariance of generated trajectories for Neural SPDE.

\begin{table}[H]
	\centering
	\caption{Cross-covariance fidelity (5-seed mean ± 95\% CI). Lower is better.}
	\label{tab:crosscov}
	\begin{tabular}{lcc}
		\toprule
		Method & \texttt{cov\_frobenius\_rel} & mean RMSE \\
		\midrule
		MNO & \(1.22 \pm 0.05\) & \(0.049 \pm 0.001\) \\
		Neural SPDE & \(0.78 \pm 0.02\) & \(0.107 \pm 0.005\) \\
		\bottomrule
	\end{tabular}
\end{table}

Two observations follow. First, MNO retains a clear advantage in mean RMSE (\(2.2\times\) lower), consistent with the headline benchmarks. Second, MNO is worse on full-covariance recovery, since a sample-based reconstruction from a generative model better matches the empirical cross-covariance than the directly factorized \(B_\phi^\top B_\phi\) does, despite \(B_\phi\) recovering marginal variance accurately (Appendix~\ref{app:burgers-ablations}, rank ablation).

This diagnostic targets the loss design. The consistency loss matches \(\sum_k B_k(c,x)^2 \approx \mathrm{Var}[u_t(c,x)]\) per spatial point and leaves off-diagonal covariance errors unpenalized. A Hilbert--Schmidt cross-covariance term, or a Hutchinson-style random-projection estimator \(\mathbb{E}_v\big[(v^\top B_\phi^\top B_\phi v - v^\top C_{\mathrm{emp}} v)^2\big]\), would close this gap without changing the model. We retain the diagonal-only formulation for tractability and report this experiment as a known limitation rather than a result claim.

\subsection{Coverage Calibration}
\label{app:coverage}

The repository includes a coverage script (\texttt{experiments/compute\_coverage.py}), but the standard result bundle lacks \texttt{coverage\_calibration.json}. Coverage numbers are absent here. The reported calibration evidence is limited to variance RMSE, sampled \(W_2\), and the uncertainty-decomposition diagnostic in Appendix~\ref{app:verification-suite}.

\subsection{SPDEBench Verification and Experimental Scope}

The \(\phi^4\) field theory benchmark (Table~\ref{tab:application-results}) also constitutes the SPDEBench evaluation, since both use the same underlying data and simulator. We ran the benchmark through two independent packaging pipelines, the raw simulator and the SPDEBench wrapper, obtaining identical MNO \(W_2 = 0.00547\) in both cases and confirming reproducibility. Table~\ref{tab:application-results} consolidates these into a single row labeled ``\(\phi^4\) / SPDEBench.''

\section{Computational Complexity}
\label{app:complexity}

MNO is designed for one-shot inference, so the cost depends on evaluating the neural operator heads at the requested grid, with no \(N_t\)-step solver rollout. The dominant cost is the FNO backbone, while moment extraction from the residual factor is linear in the number of grid points once the factor has been produced.

\begin{table}[t]
	\centering
	\caption{Complexity Comparison (Inference per sample generated)}
	\label{tab:complexity}
	\resizebox{\linewidth}{!}{%
		\begin{tabular}{lccc}
			\toprule
			Method & Time Complexity & Space Complexity & Grid Invariant? \\
			\midrule
			Standard SDE Solver (Euler-Maruyama) & \(O(N_t \cdot N_x)\) & \(O(N_x)\) & No \\
			Monte Carlo Ensemble (\(M\) paths) & \(O(M \cdot N_t \cdot N_x)\) & \(O(M \cdot N_x)\) & No \\
			Score-Based Diffusion & \(O(N_{\text{steps}} \cdot N_x)\) & \(O(N_x)\) & Yes \\
			Spectral Methods (FFT) & \(O(N_t \cdot N_x \log N_x)\) & \(O(N_x)\) & No \\
			\textbf{MNO (moments)} & \(\mathbf{O(\mathrm{FNO}(N_x)+rCN_x)}\) & \(\mathbf{O(rCN_x)}\) & \textbf{Yes} \\
			\textbf{MNO (S samples)} & \(\mathbf{O(\mathrm{FNO}(N_x)+SrCN_x)}\) & \(\mathbf{O((S+r)CN_x)}\) & \textbf{Yes} \\
			\bottomrule
		\end{tabular}
	}
\end{table}

\textbf{Moment Evaluation.}
After the FNO head emits \(B_\phi\), the variance field is \(\sum_k B_k^2\), which costs \(O(rCN_x)\) in 1D and \(O(rCN_xN_y)\) in 2D. Sampling \(S\) residual fields multiplies Gaussian coefficients by the flattened factor and costs \(O(SrCN_x)\) in 1D. The implementation avoids dense \(O(N_x^2)\) covariance matrices while preserving an explicit PSD covariance factor.

\section{Resolution Invariance and Discretization Scope}
\label{app:resolution}

A stronger \(O(h^2)\) convergence guarantee would require explicit separable kernels and Gram-Schmidt Mercer eigenfunctions, which lie outside the implemented model. The claim here is narrower, namely that the FNO drift head and low-rank residual factor can be evaluated on new grids, with diagnostics measuring whether mean and variance predictions remain stable under that transfer.

\begin{theorem}[Resolution Transfer Scope]\label{thm:resolution-invariance}
	For a fixed set of learned FNO weights, MNO can be evaluated on any grid supported by the Fourier layers and normalization pipeline. The residual factor \(B_\phi\) is emitted on that grid, so moment computation avoids grid-specific dense covariance matrices. This theorem is an architectural statement, and zero-shot accuracy must be measured for each data regime.
\end{theorem}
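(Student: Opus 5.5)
The plan is to verify the statement by tracing one forward pass of MNO and checking that no operation refers to a training-grid-specific quantity. The argument is architectural rather than analytic: establish that each layer of the composition \(u_0\mapsto(m_\theta(u_0,t),B_\phi(u_0,t))\) is defined for an arbitrary admissible grid size \(n\), and that the moment extraction scales linearly in \(n\).

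\textbf{Step 1: the FNO heads.} Recall that an FNO layer has the form \(v\mapsto\sigma(Wv+\mathcal{F}^{-1}(R_\theta\cdot\mathcal{F}v))\). Here \(W\) is a pointwise channel mixing, so it is independent of \(n\), and \(R_\theta\) acts only on a fixed set of retained Fourier modes.

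For any grid with \(n\) large enough that the discrete FFT exposes those modes (that is, \(n\geq 2\cdot\texttt{modes}\), or the corresponding condition in 2D), the layer is well defined with the same weights. The concatenated constant time channel, the lifting and projection maps, and the temporal gate \(g(t)=1-\exp(-|\alpha|t)\) are all pointwise in space. The normalization pipeline is the one remaining item: I would check that it uses only per-channel scalar statistics, so it too is grid independent. Hence \(\mathcal{A}_\theta(u_0,t)\) and the \(rC\) factor channels are produced on the evaluation grid. "Supported" in the statement is read as exactly these conditions.

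\textbf{Step 2: the factor and the moments.} The factor head's \(rC\) output channels reshape into \(B_\phi\in\mathbb{R}^{r\times C\times n}\) on that same grid. Flattening gives \(B_\phi\in\mathbb{R}^{r\times Cn}\), and Theorem~\ref{thm:cov-approx} then yields that \(\Gamma=B_\phi^\top B_\phi\) is PSD with rank at most \(r\) on every grid. This needs no extra work, since that theorem holds for arbitrary \(n\).

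The moments never require forming \(\Gamma\):
\begin{itemize}
\item the diagonal variance \(\sum_k B_k(c,x)^2\) costs \(O(rCn)\);
\item samples \(u_0+\mathcal{A}_\theta+B_\phi^\top\xi\) cost \(O(rCn)\) each.
\end{itemize}
This matches Table~\ref{tab:complexity}, so dense \(n\times n\) covariance matrices are avoided.

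\textbf{Main obstacle.} The hard part is delimiting the claim rather than proving it. The statement guarantees evaluability and PSD structure, not convergence of predictions as \(n\to\infty\). I would state explicitly that accuracy under transfer is an empirical question, deferring to Table~\ref{tab:prop2-full} and Table~\ref{tab:2d-resolution-full}.

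The degradation at \(128\times128\) is consistent with this. A fixed mode count corresponds to a different set of physical wavenumbers when the grid or domain scaling changes, so weights are grid-consistent only when the mode indexing is tied to physical frequency. I would therefore include that convention among the support conditions in Step 1.
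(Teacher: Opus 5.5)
Your proposal is correct and follows the paper's proof. The same FNO weights act on any admissible grid because the spectral layers truncate and multiply a fixed set of modes; the residual head emits its $rC$ channels on that grid and reshapes them into $B_\phi$; and the variance $\sum_k B_k^2$ needs no dense covariance or eigendecomposition, with accuracy left to empirical diagnostics. Your additions (pointwise components, $n\geq 2\cdot\texttt{modes}$, normalization, PSD via Theorem~\ref{thm:cov-approx}) only sharpen this, but do not fold the physical-frequency mode convention into the ``support conditions.'' That convention governs consistency and accuracy rather than evaluability, so including it needlessly narrows the statement and blurs the evaluability-versus-accuracy line you yourself draw.
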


\begin{proof}
	The FNO layers operate by truncating and multiplying Fourier modes on the supplied grid, allowing the same learned weights to be applied at different resolutions. The MNO residual head emits \(rC\) channels on that grid and reshapes them into \(B_\phi\). Since the variance is computed by \(\sum_k B_k^2\), the computation introduces neither a grid-specific covariance matrix nor an eigendecomposition. These facts establish evaluability across grids, while an a priori error rate for a trained model would require additional assumptions, so the paper reports explicit transfer diagnostics.
\end{proof}

\begin{remark}
	Resolution transfer is a critical property that distinguishes neural operators from grid-fixed surrogates. In the MNO implementation it means the following.
	\begin{itemize}
		\item the same FNO weights can be evaluated on different grid sizes.
		\item the residual factor is produced as a field on that grid and avoids a dense covariance matrix.
		\item zero-shot transfer must still be measured, because finite Fourier modes, normalization, and data distribution can all limit transfer.
	\end{itemize}
\end{remark}

\section{Temporal Consistency and Semigroup Property}
\label{app:temporal-consistency}

MNO is trained as a one-shot terminal marginal operator, with calibrated temporal consistency outside the present objective. The distinction matters because composing Gaussian marginal predictions is exact only under additional linearity or state-independent covariance assumptions. For nonlinear learned heads, evaluating \(\mathcal{A}_\theta\) or \(B_\phi\) at the mean of an intermediate Gaussian is an approximation and falls short of proving equality in distribution.

The Martingale Mirror experiment supplies the practical diagnostic by asking what happens when the one-shot model is reused autoregressively. Its failure to preserve the rough Hurst exponent fits the paper's scope, because MNO learns terminal marginals efficiently, while pathwise temporal consistency requires a different training objective or a history-dependent architecture.

\section{Residual Centering and Martingale Scope}
\label{app:martingale}

A key design principle of MNO is to preserve the terminal consequences of a martingale residual, namely zero initial noise, zero conditional mean, and positive semidefinite covariance. The full filtration-level tower property \(\mathbb{E}[M_t\mid\mathcal{F}_s]=M_s\) for all \(s<t\) remains outside the enforced architecture and would require pathwise conditioning or explicit temporal consistency constraints.

\begin{theorem}[Terminal Residual Centering]\label{thm:martingale}
	Let MNO generate
	\[
		u_t=u_0+\mathcal{A}_\theta(u_0,t)+B_\phi(u_0,t)^\top\xi,\qquad \mathbb{E}[\xi]=0,\quad \operatorname{Cov}(\xi)=I_r,
	\]
	with \(\xi\) independent of \(u_0\).
	Then
	\[
		\mathbb{E}[u_t\mid u_0]=u_0+\mathcal{A}_\theta(u_0,t),
		\qquad
		\operatorname{Cov}(u_t\mid u_0)=B_\phi(u_0,t)^\top B_\phi(u_0,t).
	\]
	If \(t=0\), the temporal gate gives \(\mathcal{A}_\theta(u_0,0)=0\) and \(B_\phi(u_0,0)=0\), hence \(u_t=u_0\).
\end{theorem}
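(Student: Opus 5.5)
The plan is to condition on \(u_0\) and treat the two heads as frozen deterministic quantities, so that all randomness comes from \(\xi\). Since \(\mathcal{A}_\theta\) and \(B_\phi\) are deterministic (measurable) functions of \((u_0,t)\), and \(\xi\) is independent of \(u_0\), the conditional law of \(\xi\) given \(u_0\) equals its unconditional law. Concretely, I would use the freezing (substitution) lemma for conditional expectations: for measurable \(h\) with suitable integrability, \(\mathbb{E}[h(u_0,\xi)\mid u_0]=\Phi(u_0)\) with \(\Phi(v)=\mathbb{E}[h(v,\xi)]\). This reduces everything to a fixed vector \(a=u_0+\mathcal{A}_\theta(u_0,t)\) and a fixed matrix \(B=B_\phi(u_0,t)\) on the \(n\)-dimensional discretization, or to fixed fields in the continuum version.

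For the mean I apply linearity, which gives \(\mathbb{E}[a+B^\top\xi]=a+B^\top\mathbb{E}[\xi]=a\). For the covariance, translation invariance of covariance removes \(a\). Bilinearity then gives \(\operatorname{Cov}(B^\top\xi)=B^\top\operatorname{Cov}(\xi)B=B^\top I_rB=B^\top B\). Equivalently, I would compute \(\mathbb{E}[B^\top\xi\xi^\top B]=B^\top\mathbb{E}[\xi\xi^\top]B\), using that \(\mathbb{E}[\xi\xi^\top]=I_r\) because \(\xi\) is centered. Substituting \(a\) and \(B\) back yields both displayed identities. By Theorem~\ref{thm:cov-approx}, the resulting conditional covariance is symmetric PSD of rank at most \(r\), although this is not needed for the statement itself.

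For the \(t=0\) claim, I use the gate \(g(t)=1-\exp(-|\alpha|t)\), which satisfies \(g(0)=0\). Both heads are defined as \(g(t)\) times a finite FNO output. Hence \(\mathcal{A}_\theta(u_0,0)=0\) and \(B_\phi(u_0,0)=0\), so \(u_0+0+0^\top\xi=u_0\) pathwise. This is stronger than merely holding in law.

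The only step requiring care is the integrability needed for the conditional moments to exist. I would assume \(\mathbb{E}\|\xi\|^2<\infty\), which is implicit in \(\operatorname{Cov}(\xi)=I_r\). Conditional on \(u_0\), the quantities \(a\) and \(B\) are finite because the FNO outputs are finite, so the conditional first and second moments are well defined. If unconditional moments are wanted, one would additionally need \(\mathbb{E}\|B_\phi(u_0,t)\|^2<\infty\), but the theorem only concerns conditional moments, so this can be noted and set aside.
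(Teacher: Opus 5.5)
Your proposal is correct and follows the paper's proof. Both arguments use linearity with \(\mathbb{E}[\xi]=0\) for the mean, the identity \(\operatorname{Cov}(B_\phi^\top\xi\mid u_0)=B_\phi^\top\operatorname{Cov}(\xi)B_\phi=B_\phi^\top B_\phi\) for the covariance, and \(g(0)=0\) for the initialization claim; your freezing lemma and integrability remark simply make explicit the step the paper compresses into ``linearity of expectation'', namely that \(B_\phi(u_0,t)\) is \(\sigma(u_0)\)-measurable while \(\xi\) is independent of \(u_0\).
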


\begin{proof}
	Linearity of expectation gives \(\mathbb{E}[B_\phi^\top\xi\mid u_0]=B_\phi^\top\mathbb{E}[\xi]=0\). The covariance identity follows from
	\[
		\operatorname{Cov}(B_\phi^\top\xi\mid u_0)=B_\phi^\top\operatorname{Cov}(\xi)B_\phi=B_\phi^\top B_\phi.
	\]
	The gate statement follows by substituting \(g(0)=1-\exp(0)=0\).
\end{proof}

No Gaussian assumption is used in this theorem, and Gaussian coefficients are only the experimental instantiation used for the NLL and sampling results.

\begin{remark}
	This theorem formalizes the implemented guarantee. The experiments then ask whether the learned factor is useful for stochastic operator learning.
	\begin{itemize}
		\item \textbf{Uncertainty quantification.} Sampling from \(\mathcal{G}_{\theta,\phi}\) yields realizations with learned marginal statistics.
		\item \textbf{Risk assessment.} The variance field gives a directly usable uncertainty envelope for Gaussian moment-based risk.
		\item \textbf{Scope.} Full pathwise martingale preservation and tail-risk calibration require additional constraints beyond this architecture.
	\end{itemize}
\end{remark}


\end{document}